\lstdefinelanguage{BibTeX}
  {keywords={%
      @article,@book,@collectedbook,@conference,@electronic,@ieeetranbstctl,%
      @inbook,@incollectedbook,@incollection,@injournal,@inproceedings,%
      @manual,@mastersthesis,@misc,@patent,@periodical,@phdthesis,@preamble,%
      @proceedings,@standard,@string,@techreport,@unpublished%
      },
   comment=[l][\itshape]{@comment},
   sensitive=false,
  }
\newtheorem{theorem}{Theorem}
\title{The Cost of Training Machine Learning Models over Distributed Data Sources}
\author{Elia Guerra, Francesc Wilhelmi, Marco Miozzo, Paolo Dini
\thanks{Elia Guerra, Marco Miozzo, and Paolo Dini are with Centre Tecnol\`ogic de Telecomunicacions de Catalunya (CTTC/CERCA). Francesc Wilhelmi is with Nokia Bell-Labs, Stuttgart, Germany. Corresponding author: eguerra@cttc.cat. This publication has been partially funded by the Spanish project PID2020-113832RB-C22(ORIGIN)/MCIN/AEI/10.13039/50110001103, European Union Horizon 2020 research and innovation programme under Grant Agreement No. 953775 (GREENEDGE) and the grant CHIST-ERA-20-SICT-004 (SONATA) by PCI2021-122043-2A/AEI/10.13039/501100011033}
}
\newcounter{remark}[section]
\begin{document}

\maketitle

\begin{abstract}
Federated learning is one of the most appealing alternatives to the standard centralized learning paradigm, allowing a heterogeneous set of devices to train a machine learning model without sharing their raw data. 
However, it requires a central server to coordinate the learning process, thus introducing potential scalability and security issues. In the literature, server-less federated learning approaches like gossip federated learning and blockchain-enabled federated learning have been proposed to mitigate these issues. 
In this work, we propose a complete overview of these three techniques proposing a comparison according to an integral set of performance indicators, including model accuracy, time complexity, communication overhead, convergence time, and energy consumption. 
An extensive simulation campaign permits to draw a quantitative analysis considering both feedforward and convolutional neural network models.  
Results show that gossip federated learning and standard federated solution are able to reach a similar level of accuracy, and their energy consumption is influenced by the machine learning model adopted, the software library, and the hardware used.
Differently, blockchain-enabled federated learning represents a viable solution for implementing decentralized learning with a higher level of security, at the cost of an extra energy usage and data sharing.
Finally, we identify open issues on the two decentralized federated learning implementations and provide insights on potential extensions and possible research directions in this new research field.
\end{abstract}

\IEEEkeywords
blockchain, decentralized learning, edge computing, energy efficiency,  federated learning, machine learning.
\endIEEEkeywords

\section{Introduction}
\label{sec:introduction}

learning (ML) models, and in particular deep neural networks, require a substantial amount of data and computational power that might not be available on a single machine. As a consequence, ML operations are normally run at cloud servers (or data centers), where batteries of powerful processing units enable short training and inference computation times. However, training ML models in a data center requires moving data from the information sources (e.g., edge devices) to the central system. This approach runs into several issues:
\begin{itemize}
	\item \textbf{Communication overhead.} Nowadays, the huge pervasiveness of mobile services, devices, and network infrastructures makes data sources mainly distributed. As testified recently by the Ericsson Mobility Report~\cite{ericsson_mobility}, mobile network data traffic grew exponentially over the last $10$ years, with a remarkable increase of $42 \%$ between Q3 2020 and Q3 2021. Mobile data traffic is projected to grow by over $4$ times to reach $288$~EB per month by 2027~\cite{ericsson_mobility}. Moving such a big amount of data from distributed sources to a central location for ML operations may create network congestion and service outage.
	\item \textbf{Latency.} In several real-life scenarios, transmitting data requires a stable and reliable connection to minimize latency and ensure updated models, which cannot be always guaranteed. For example, minimizing communication latency in connected vehicles is essential to guarantee road safety~\cite{8584062}.
	\item \textbf{Energy consumption.} Running ML models in cloud data centers consumes a significant amount of energy and cannot be considered sustainable from an environmental perspective. As reported in~\cite{noauthor_ai_2018}, from 2012 to 2018, the computations required for training a deep learning (DL) model have been doubling every 3.4 months, with an estimated increase of $300000$x. Estimates show that training a state-of-the-art natural language processing model produces more $\text  {CO}_2$ than an average car in one year lifetime~\cite{strubell2019energy}.
	\item \textbf{Privacy.} With the growing awareness of data privacy and security, it is often undesirable, or even unfeasible, to collect and centralize users' data~\cite{zhang2022introduction}. For instance, a single hospital may not be able to train a high-quality model for a specific task on its own (due to the lack of data), but it cannot share raw data due to various policies or regulations on privacy~\cite{li2019survey}. Another example could be the case of a mobile user that would like to employ a good next-word predictor model without sharing his/her private historical text data.
\end{itemize}

\subsection{Edge AI and Federated Learning}
To address the challenges that stem from cloud-based centralized ML, edge computing pushes cloud services to the network edge and enables distributed ML operations, i.e., the so-called edge intelligence~\cite{chen2021distributed}. In particular, AI on Edge~\cite{deng2020edge} is the paradigm of running AI models with a device-edge-cloud synergy. It allows to relax the massive communication requirements and privacy of cloud-based ML operations~\cite{zhou2019edge}. Moreover, distributing ML computation over the edge has been demonstrated to save up to the $25\%$ of the energy consumption~\cite{Ahvar2019}. In fact, data may be directly processed at the edge with smaller and more energy efficient devices (no need of air conditioning systems) and the energy cost related to communication is limited due to unnecessary data transmission. 

Among the several training paradigms enabled by edge intelligence, federated learning (FL) has emerged as a popular solution by providing low communication overhead, enhanced user privacy, and security to distributed learning~\cite{konevcny2016federated}. With FL, the ML model is trained cooperatively by edge devices without sharing local data, but exchanging only model parameters. 
The usual implementation envisages an iterative procedure whereby a central server collects local updates from the clients (e.g., edge devices) and returns an aggregated global model. 
In the rest of the paper, we refer to centralized FL (CFL) to the traditional server-dependent FL scenario. In this setting, the server has to wait for all the clients before returning a new global update. Therefore, high network latency, unreliable links, or straggled clients may slow down the training process and even worsen model accuracy~\cite{xie2019asynchronous}. 
In addition, the central server represents a single point of failure, i.e., if it is unreachable due to network problems or an attack, the training process cannot continue. Furthermore, it may also become a bottleneck when the number of clients is very large~\cite{kairouz2019advances}. 

Decentralized and server-less solutions for federated learning have been introduced in the literature, mainly to overcome the single point of failure and the security problems~\cite{mothukuri2021survey, barbieri2022decentralized}. 
In~\cite{lalitha2018fully} a decentralized FL mechanism was proposed by enabling one-hop communication among FL clients. Similarly, gossip FL (GFL) extends device-to-device (D2D) communications to compensate for the lack of an orchestrating central server~\cite{ormandi2013gossip,giaretta2019gossip}. It guarantees a low communication overhead thanks to the reduced number of messages~\cite{miozzo2021distributed}. 

Beyond, we find more sophisticated proposals, like blockchain-enabled federated learning (BFL), which adopts blockchain to share FL information among devices, thus removing the figure of the orchestrating central server. In this way, blockchain removes the single point of failure for the sake of openness and decentralization and provides enhanced security via tampered-proof properties~\cite{wilhelmi2021blockchain}. 

\subsection{Contributions}
Despite in the literature it is possible to find papers comparing classical centralized learning in data center with CFL~\cite{qiu2020can},~\cite{savazzi2022energy}, a comparison among the different federated learning approaches (centralized versus decentralized) is still missing. 
In this work, we aim to fill this gap and, thus, we focus on two of the most popular and widely adopted approaches for decentralizing FL: GFL and BFL. In particular, we provide a comprehensive analysis of both methods and compare them to traditional FL, i.e., CFL. Note that we combine standard performance indicators for ML models, i.e., accuracy, with indicators that quantify the efficiency of these algorithms, i.e., time complexity, communication overhead, convergence time, and energy consumption. 
With our comparison under fair conditions, we would like to provide the research community with a complete overview of the three approaches together with all the information to choose the best model according to the specific use cases.

The contributions of this paper may be summarized as follows:
\begin{itemize}
	\item We overview the traditional FL setting and delve into two approaches for decentralizing it. They are selected since are two of the most popular in the literature and are kind of diverging into two completely different solutions, which are based on gossip communication and blockchain technology, respectively.
	\item We provide a thorough analysis to derive the running time complexity, the communication overhead and the convergence time of each overviewed mechanism for FL, including CFL, BFL, and GFL. 
	\item We provide an energy model to measure the energy consumption of each solution, based on the associated communication and computation overheads. 
	\item We assess the performance of each method (CFL, GFL, and BFL) through extensive simulations on widely used TensorFlow libraries~\cite{tensorflow2015-whitepaper}.
	\item We delve into the open aspects of decentralized FL, providing insights on potential extensions, considerations, and software implementations for GFL and BFL.
\end{itemize}

The rest of the paper is structured as follows: Section~\ref{sec:relatedwork} reviews the related work. Section~\ref{sec:background} describes the three studied algorithms (CFL, BFL, and GFL). Section~\ref{sec:theoretical_analysis} analyzes their time complexity, the communication cost, introduces the communication model and the convergence time. Section~\ref{sec:energy_communication_model} provides the energy model used in this paper. Then, Section~\ref{sec:exp_settings} compares the three mechanisms through simulation results. 
In Section~\ref{sec:open-aspects}, we provide some open issues of GFL and BFL, and we discuss the correspondent optimizations and future research directions.
Finally, Section~\ref{sec:conclusions} concludes the paper with final remarks.

\section{Related Work}
\label{sec:relatedwork}

Distributing and decentralizing ML operations at the edge has been embraced as an appealing solution for addressing the issues of centralization (connectivity, privacy, and security)~\cite{verbraeken2020survey}. With FL, different devices collaborate to train an ML model by sharing local updates obtained from local and private data. The traditional FL algorithm (FedAvg), referred to as CFL in this paper, is introduced in~\cite{mcmahan2017communication}. In~\cite{konevcny2016federated}, the authors propose techniques to improve its communication efficiency.  Nevertheless, CFL still requires a central server responsible for clients orchestration and model aggregation. The star topology is a weak aspect of CFL, since the central entity represents a single point of failure, it may limit the number of devices that can participate in the training process, augments the communication cost of the learning process,  
and presents privacy concerns~\cite{li2020federated, kairouz2019advances}.   

To address these challenges, decentralized federated learning has been proposed in~\cite{lalitha2018fully}. The authors present a fully decentralized model, in which each device can communicate only with its one-hop neighbors. The authors also provide a theoretical upper bound on the mean square error. 
Ormandi et al.~\cite{ormandi2013gossip} introduce gossip FL, a generic approach for peer-to-peer (P2P) learning on fully distributed data, i.e., every device has only a single input sample to process at each round.
The same algorithm has been tested in~\cite{giaretta2019gossip} under real-world conditions, i.e., devices have multiple input samples available (rather than only one point, as originally stated in~\cite{ormandi2013gossip}), restricted communication topology and,
heterogeneous communication and computation capabilities.

Another prominent solution to decentralize FL is blockchain-enabled FL~\cite{kim2019blockchained, majeed2019flchain, 8905038}. A blockchain system allows clients to submit and retrieve model updates without the central server. Additionally, the usage of blockchain guarantees security, trust, privacy, and traceability, however, it introduces delays due to the distributed ledger technology. An analysis of end-to-end latency and the effects of blockchain parameters on the training procedure of BFL is proposed in~\cite{wilhelmi2021blockchain}.

In the literature, there exist some comparisons across FL techniques. The authors of~\cite{hegedHus2021decentralized} compare GFL and CFL with a logistic regression model 
in terms of convergence time, proportion of the misclassified examples in the test set (0-1 error), and used communication resources. 
When nodes have a random subset of the learning samples, GFL performance is comparable with CFL; instead, CFL converges faster when a node has only labels from one class.
Another comparison is proposed in~\cite{nilsson2018performance}, where 
the performance of FL algorithms that require a central server, e.g., FedAvg and Federated Stochastic Reduced Gradient are analyzed. Results show that FedAvg achieves the highest accuracy among the FL algorithms regardless of how data are partitioned. In addition, the comparison between FedAvg and the standard centralized algorithm shows that they are equivalent when independent and identically distributed (IID) datasets are used. 

In~\cite{miozzo2021distributed}, the authors compare GFL 
with the standard centralized data center based architecture in terms of accuracy and energy consumption for two radio access network use-cases.
To achieve this goal, they use the machine learning emission calculator~\cite{lacoste2019quantifying} and green algorithms~\cite{lannelongue2021green}.
In~\cite{qiu2020can}, the authors compare centralized data center based learning and CFL in terms of carbon footprint using different datasets. The assessment is done by sampling the CPU and GPU power consumption. 
In~\cite{savazzi2022energy}, the authors propose a framework to evaluate the energy consumption and the carbon footprint of distributed ML models with focus on industrial Internet of Things applications. The paper identifies specific requirements on the communication network, dataset and model size to guarantee the energy efficiency of CFL over centralized learning approaches, i.e., bounds on the local dataset or model size.
Differently from our work, the authors do not consider Blockchain-enabled FL and evaluate the algorithm performance in scenarios with limited number of devices (i.e., 100). Moreover, we also empirically measure the energy consumption of the devices based on the real load of the computations realized during the training phase; we provide a communication model to estimate overheads and convergence time.
Additionally, here we introduce an analysis on the computational complexity of the three federated algorithms under study.

In summary, in this paper, we bridge the gap in the literature by providing a thorough comparison including performance analysis and cost of the different federated approaches listed above, i.e., CFL, BFL, and GFL. %, to evaluate their performance in a specific study case. 
Differently from the other works in the literature, we combine standard metrics, i.e., accuracy, with indicators of the efficiency of these algorithms, i.e., computational complexity, communication overhead, convergence time and energy consumption. 
Our final aim is to contribute to the development of Green AI~\cite{schwartz2020green}.

\section{Federated Learning Implementations}
\label{sec:background}

Let us consider a set of $N$ clients (or devices) $\mathcal{N} = \{1, ..., N\}$ with their datasets $D_1, ..., D_{N}$. Each local dataset $D_i, \forall i \in \mathcal{N}$, contains pairs $(x_i, y_i)$, where $x_i$ is the feature vector, and $y_i$ its true label. The goal of a federated setting is to train a global model (e.g., a set of weights $w$), that minimizes the weighted global loss function: 
\begin{equation}
    \ell = \sum_{i = 1}^{N} \ell_i(w, x_i, y_i),
\end{equation}
where $\ell_i$ is the local loss experienced by client $i$. In this scenario, devices do not share raw local data with other devices. Instead, they exchange model parameter updates, computed during several iterations by training the global model on local data.
In this paper, we study three different implementations to solve the federated problem stated above, namely: CFL, BFL, and GFL. The investigated solutions are depicted in Fig.~\ref{fig:communication} and we will introduce them in what follows. Though several variants are available in the literature, the three algorithms described next are baseline representations of the approaches studied and well suitable for our purposes.

\begin{figure}[ht]
    \includegraphics[width = \columnwidth]{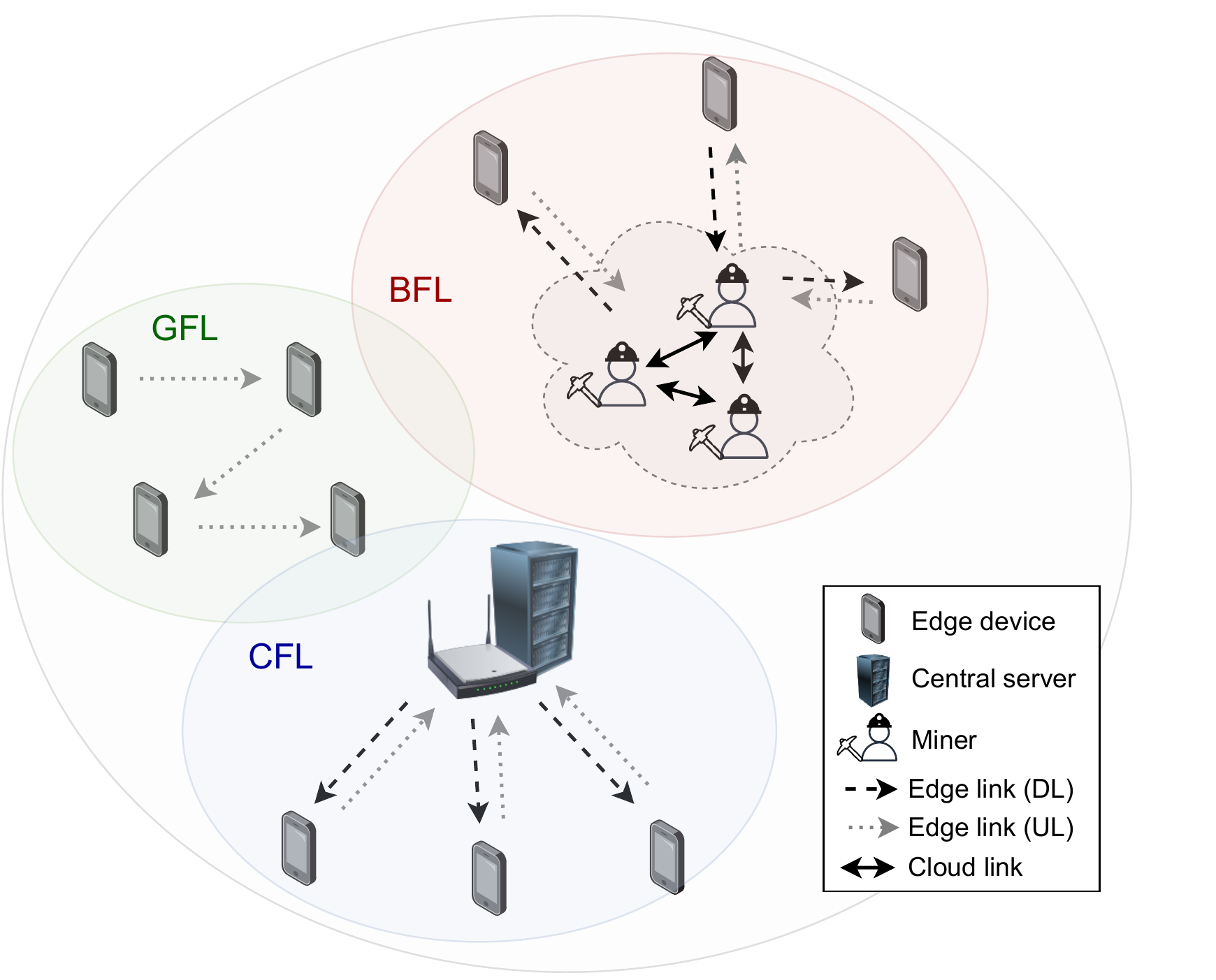}
    \caption{Overview of the different FL scenarios.}
    \label{fig:communication}
\end{figure}

\subsection{Centralized Federated Learning (CFL)}\label{sub:CFLearning}

At the beginning of a round $t$, a random subset of $m$ devices $\mathcal{S}^t \subseteq \mathcal{N}$ is selected, and the server sends the current global model to the parties. Each client makes $E$ epochs on the local dataset with a  mini-batch size of $B$, updates its local model $w_k^{t+1}$ and sends it to the server. The server aggregates the local updates and generates the new global model by computing the weighted average of the local updates as follows:
\begin{equation}
\label{eq:global_update}
    w^{t+1} = \sum_{k \in \mathcal{S}^t} \frac{|D_k|}{|D|}w_k^{t+1},
\end{equation}
where $|D| = \sum_{k \in \mathcal{S}^t} |D_k|$. The process is repeated until the model reaches convergence, e.g., the loss function does not improve across subsequent epochs or a specific number of training rounds have been executed. In this work, we consider the FedAvg algorithm~\cite{mcmahan2017communication} as a merging method to generate global model updates.

Algorithm~\ref{alg:FedAVG} describes the CFL with FedAvg mechanism. The procedure \textsc{Main} is executed by the server that coordinates the whole training process. Each client executes the procedure \textsc{ClientUpdate} and applies the stochastic gradient descent (SGD) algorithm on its local dataset with a learning rate $\eta$.
\begin{algorithm}
\caption{CFL}\label{alg:FedAVG}
    \begin{algorithmic}[1]
        \Procedure{Main}{}
        \State initialize $w_0$\;\label{CFL:startB1}
        \State$t \leftarrow 0$
        \While{convergence is not reached}
            \State {$\mathcal{S}^t \leftarrow$ random set of $m$ clients}
            \For {each client $k\in \mathcal{S}^t$ \textbf{in parallel}}
                \State {Download the global model $w^t$}
                \State {$w^{t+1}_k \leftarrow$ \textsc{ClientUpdate}($k$, $w^t$) }
                \State Send $w_k^{t+1}$ to the server\
            \EndFor\label{CFL:endB1}
            \State{$w^{t+1} \leftarrow \sum_{k \in \mathcal{S}^t} \frac{|D_k|}{|D|}w^{t+1}_k $}
            \State $t \leftarrow t + 1$ \label{CFL:endB2}
        \EndWhile 
    \EndProcedure
    \end{algorithmic}
    \begin{algorithmic}[1]
        \Procedure{ClientUpdate$(k, w)$}{}\Comment{Run on client $k$}
        \State $\mathcal{B} \leftarrow$ split the local dataset into batches of size $B$\;
        \For {$E$ local epochs}
            \For {batch $b \in \mathcal{B}$} \label{l:for_clientupdate}
                \State $w \leftarrow w - \eta\nabla \ell (w,b)$ 
            \EndFor
        \EndFor
        \State \Return $w$
        \EndProcedure
    \end{algorithmic}
\end{algorithm}

\subsection{Blockchain-enabled Federated Learning (BFL)}\label{sub:FLchain}

BFL is based on distributed ledger technology, which  collects data in form of transactions and organizes it in blocks. Indeed, a blockchain is a sequence of blocks chained one after the other through advanced cryptographic techniques. Each block contains the hash value of the previous one, leading to a tampered-proof sequence and providing properties that are essential to building trust in decentralized settings, such as transparency and immutability. In a blockchain, a set of participant nodes (miners) apply certain mining protocols and consensus mechanisms to append new blocks to the blockchain and agree on the status of the same. This procedure allows devices to write concurrently on a distributed database and guarantees that any malicious change on data would not be accepted by the majority, so that data in a blockchain is secured. 

When a blockchain is applied to a federated setting, the process is going as follows~\cite{wilhelmi2021blockchain}:
\begin{enumerate}
    \item Each device submits its local model updates in the form of transactions to the blockchain peer-to-peer (P2P) network of miners.
    \item The transactions are shared and verified by miners. \label{process:BC_start}
    \item Miners execute certain tasks to decide which node updates the chain. One of the most popular mining mechanisms, and studied in this paper, is Proof-of-Work (PoW)~\cite{nakamoto2008bitcoin}, whereby miners spend their computational power (denoted by $\lambda$) to solve computation-intensive mathematical puzzles.  
    \item As a result of the concurrent mining operation, a new block is created and propagated throughout the P2P blockchain network every $BI$ seconds (on average). The block size $S_B$ is selected such that can include a maximum of $m$ transactions, each one representing a local model submitted by a client.\label{process:BC_end}
    \item Clients download the latest block from its associated miner (as in~\cite{kim2019blockchained, nguyen2021federated}), which would allow performing on-device global model aggregation and local training.
\end{enumerate}

An important consequence of the blockchain decentralized consensus is forking. A fork occurs when two or more miners generate a valid block simultaneously (i.e., before the winning block succeeds to be propagated). The existence of forks can be seen as a waste of resources, as it may lead to extra computation and delay overheads~\cite{wilhelmi2021discrete}.
In this work, we consider the version of BFL reported in Algorithm~\ref{alg:FLchain}~\cite{wilhelmi2021blockchain}, which entails the participation of 
multi-access edge computing (MEC) servers and edge devices. Each client downloads the updates ${w^t_1 ... w^t_{m}} \in b^t$ contained in the latest block, computes the new global $w^t$, and trains it on its local dataset with the \textsc{ClientUpdate} procedure described in Section~\ref{sub:CFLearning}. The parameters of the new updated model $w^{t+1}_k$ are then submitted with the method \textsc{SubmitLocalUpdate}, where $S_{tr}$ is the transaction size. Once all the local updates are uploaded to the blockchain, a new block $b^{t+1}$ is mined with \textsc{MineBlock}, 
where the block generation rate, $\lambda=\frac{1}{BI}$, is derived from the total computational power of blockchain nodes. 
Finally, the new block is shared across all the blockchain nodes with the procedure \textsc{PropagateBLock}, which depends on the size of block $b^{t+1}$ (fixed to $S_B$). The process is repeated until convergence. 

\begin{algorithm}
    \caption{BFL}\label{alg:FLchain}
    \begin{algorithmic}[1]
        \Procedure{Main}{}
            \State $t \leftarrow 0$\;
            \State initialize $w_0$\;
            \While {convergence is not reached} %\francesc{[Remove ``in parallel'' and add something else (e.g., while active)]}
                \State {$\mathcal{S}^t \leftarrow$ random set of $m$ clients}\; 
                \For {each client $k\in \mathcal{S}^t$ \textbf{in parallel}}
                    \State {Download the latest block, $b^t$}
                    \State {$w^{t} \leftarrow \sum_{j\in b^t} \frac{|D_j|}{|D|}w_j^{t}$}\label{FLchain:update}
                    \State {$w^{t+1}_k \leftarrow$ \textsc{ClientUpdate}($k$, $w^t$)}
                    \State {\textsc{SubmitLocalUpdate($S_{tr}$)}}
                \EndFor
                \State{$b^{t+1}\leftarrow$\textsc{MineBlock}($\lambda$)} \label{line:mine_b}
                \State{\textsc{PropagateBlock}($b^{t+1}$)} 
                %\State{$w_{t+1} \leftarrow \sum_{k\in S_t} \frac{|D_k|}{|D|}w^k_{t+1}$}
                \If {$b^{t+1}$ \textbf{is not valid}}
                    \State{Go to line~\ref{line:mine_b}}
                \EndIf
                \State{$t \leftarrow t+1$}\; \label{CFL:counterupdate}
            \EndWhile
        \EndProcedure
    \end{algorithmic}
\end{algorithm}

\subsection{Gossip Federated Learning (GFL)}\label{sub:GL}
GFL is an asynchronous protocol that trains a global model over decentralized data using a gossip communication algorithm~\cite{ormandi2013gossip, giaretta2019gossip}. 

We consider the general skeleton proposed in~\cite{giaretta2019gossip} and~\cite{miozzo2021distributed}. Overall, the participating clients start from a common initialization. The global model is then trained sequentially on local data and following a given path (e.g., random walk) of visiting clients. 

Algorithm~\ref{alg:gfl} describes the GFL procedure. At each round $t$, $m$ nodes are randomly selected and ordered in a sequence $\mathcal{S}^t = [k_1, ..., k_{m}]$. Every node $k_i \in \mathcal{S}^t$ receives the model $w_{k_{i-1}}^t$ from the previous node in the sequence and performs the steps below, also reported in Fig.~\ref{fig:GFL_node_update} for a better understanding:
\begin{enumerate}
    \item Run the procedure \textsc{Merge} to combine $w_{k_{i-1}}^t$ and the model saved in its local cache, $\rm{lastModel}_{k_i}$, i.e., the model from the previous round in which the node have been selected.
    \item Train the model generated in 1) on the local dataset with the procedure \textsc{ClientUpdate} described in Section~\ref{sub:CFLearning}.
    \item Update the local cache ($\rm{lastModel}_{k_i}$) with the model received from the previous node $w^t_{k_{i-1}}$.
    \item Share the model trained on the local dataset $w_{k_i}^t$ with the following node in the sequence.
\end{enumerate}
A round is completed when the model has visited all the nodes in the sequence. The algorithm stops when convergence is reached (after a given number of rounds).

\begin{algorithm}
    \caption{GFL}
    \label{alg:gfl}
    \begin{algorithmic}[1]
        \Procedure{Main}{}
            \State initialize $lastModel_k$ for each client $k$
            \State $t \leftarrow 0$
            \While {convergence is not reached}
                \State $\mathcal{S}^t \leftarrow$ random set of $m$ clients
                \State $[k_1, ..., k_{m}] \leftarrow \textsc{GetSequence}(\mathcal{S}^t)$
                \For {$i = 1, ..., m $}
                    \State $w^t_{k_{i-1}}$ \Comment{Model trained by the previous node in the sequence}
                    %\State 
                    \State {$w^t_{k_i} \leftarrow \textsc{Merge($w^t_{k_{i-1}}, \rm{lastModel}_{k_i}$)}$} \label{l:merge_gfl}
                    \State $w^t_{k_i} \leftarrow \textsc{ClientUpdate}(k_i, w^t_{k_i})$
                    \State $\rm{lastModel}_{k_i}$ $\leftarrow$ $w^t_{k_{i-1}}$
                    \State Send model to the next client
                \EndFor
                \State $t\leftarrow t+1$
            \EndWhile
        \EndProcedure\\
        \Procedure{Merge$(w,w')$}{}
            \State $w \leftarrow \frac{w + w'}{2}$
            \State \Return $w$
        \EndProcedure
        \Procedure{GetSequence($S^t$)}{}
            \State $[k_1, \ldots, k_{m}] \overset{\text{i.i.d.}}{\sim} U(S^t)$
            \State \Return $[k_1, \ldots, k_{m}]$
        \EndProcedure
    \end{algorithmic}
\end{algorithm}

\begin{figure}[ht]
    \includegraphics[width = \columnwidth]{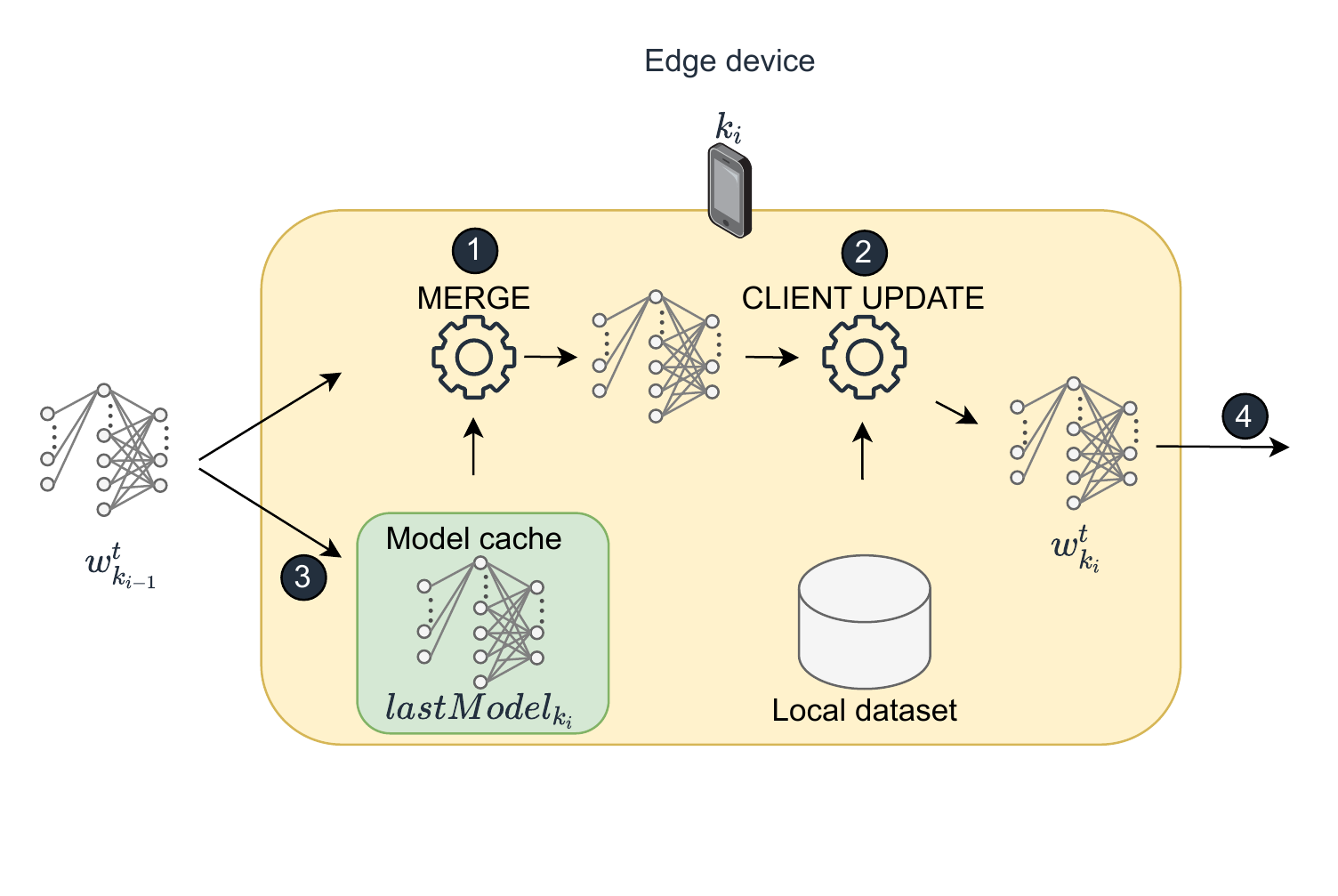}
    \caption{Overview of the operations executed by a node in the GFL algorithm.}
    \label{fig:GFL_node_update}
\end{figure}
\section{Computational and Communication Costs}
\label{sec:theoretical_analysis}
In this section, we introduce the mathematical statements for the calculation of the time complexity of the three federated algorithms considered in Section~\ref{sec:background}. 
We elaborate also on the data overhead due to the communication of the different model updates during the several rounds of the process of each implementation. Finally, we derive the equations for the calculation of the time to reach the convergence of the three analyzed federated approaches.
The results proposed hereafter are derived using the following assumptions:
\begin{enumerate}
    \item Scalar operations (sums and products) cost $O(1)$.
    \item The time complexity of the matrix multiplication is linear with the matrix size, i.e., $A \in \mathbb{R}^{i \times j}$ and $B\in \mathbb{R}^{j \times k}$, the cost of the product is $O(i \cdot j \cdot k)$.
    \item For a single input pair $(x_i, y_i)$, the time complexity required to compute $\nabla \ell$ is linear with the number of model's weights, $O(|w|)$.
    \item During the mining process, with the PoW, a miner computes the nonce of a block using brute force until finding a hash value lower or equal to a certain threshold~\cite{zheng2017overview}, referred to as the mining difficulty. Assume that the hash value has $b$ bits, and that its solution should be smaller than $2^{b-l}$ bits (being $l$ a value determined by the mining difficulty), if the miner samples the nonce values at random, the probability of a valid value is $2^{-l}$. Henceforth, $2^{l}$ sampling operations are required for mining a block. The time complexity is $O(2^l)$.
    \item The set of nodes that have a local copy of the blockchain is $\mathcal{N}_{B} = \{1, ... , N_B\}$, without loss of generality, is assumed to be $\mathcal{N} \cap \mathcal{N}_{B} = \varnothing$.
    \item We assume that convergence of the FL training procedure is reached after $R$ rounds. 
\end{enumerate}

\begin{theorem}
The time complexity of CFL is: 
    \begin{equation}
        O(RmE|D_{\max}||w|),
    \label{eq:CFL_cost}
    \end{equation}
where $D_{\max} = \max_{k \in \mathcal{N}} |D_k|$. 
The communication overhead is given by
\begin{equation}
    2Rm|w|
    \label{eq:CFL_comm}
\end{equation} 
\end{theorem}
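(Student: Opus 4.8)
The plan is to establish the two claims separately, building up from the per-client cost of \textsc{ClientUpdate} and then aggregating over the $m$ selected clients and the $R$ rounds.

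For the time complexity, I would first bound the cost of a single call to \textsc{ClientUpdate} on client $k$. Its dataset $D_k$ is split into $|D_k|/B$ mini-batches of size $B$. By assumption~3, computing $\nabla\ell$ for one input pair costs $O(|w|)$, so the gradient over a batch of $B$ samples costs $O(B|w|)$; the SGD update $w \leftarrow w - \eta\nabla\ell(w,b)$ is a length-$|w|$ vector operation of cost $O(|w|)$ by assumption~1. Hence one batch costs $O(B|w|)$, and summing over all $|D_k|/B$ batches in the inner loop (line~\ref{l:for_clientupdate}) the factor $B$ cancels, giving $O(|D_k||w|)$ per epoch. Over the $E$ local epochs, \textsc{ClientUpdate} on client $k$ therefore costs $O(E|D_k||w|)$.

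Next I would aggregate within a single round. Bounding each $|D_k|$ by $|D_{\max}| = \max_{k\in\mathcal{N}}|D_k|$, the total client work in one round is $\sum_{k\in\mathcal{S}^t} O(E|D_k||w|) = O(mE|D_{\max}||w|)$. The server's weighted averaging in Eq.~\eqref{eq:global_update} forms a combination of $m$ vectors of length $|w|$ and costs $O(m|w|)$, which is dominated by the client work. Multiplying the per-round cost by the $R$ rounds to convergence (assumption~6) yields the claimed bound $O(RmE|D_{\max}||w|)$. For the communication overhead I would simply count the model parameters exchanged per round: each of the $m$ selected clients downloads the global model $w^t$ and later uploads its local update $w^{t+1}_k$, each of size $|w|$. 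This gives $2m|w|$ parameters transferred per round, and summing over $R$ rounds gives $2Rm|w|$.

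The main subtlety is the accounting convention rather than any hard calculation. The factor $m$ in the computational bound reflects the total work summed across all participating clients, not the wall-clock time of the ``in parallel'' execution (which would instead give $O(E|D_{\max}||w|)$ per round, determined by the client holding $D_{\max}$). Total work is the appropriate quantity here, since the paper's subsequent energy model aggregates consumption over all devices. The one place where care is needed is the mini-batch step: it is essential to observe that the $B$ from the per-batch gradient cost exactly cancels the $1/B$ from the number of batches, so that the batch size $B$ does not appear in the final complexity.
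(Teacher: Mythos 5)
Your proposal is correct and follows essentially the same route as the paper's proof: bound the per-client \textsc{ClientUpdate} cost by $O(E|D_{\max}||w|)$ via the per-sample gradient assumption, sum over the $m$ selected clients, note that the server's $O(m|w|)$ aggregation and the per-round up/download costs are dominated, multiply by $R$, and count two model transfers per client per round for the overhead. The only cosmetic difference is that the paper keeps the SGD-update term $2\frac{|D_k|}{B}|w|$ explicit before discarding it as non-dominant, whereas you absorb it into the per-batch cost up front.
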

\begin{proof}
See Appendix~\ref{sub:cfl_proof}.
\end{proof}

\begin{theorem}
The time complexity of BFL is:
    \begin{equation}
         O(R (|w|m^2 + E|D_{\max}||w|m + 2^l + m|w|N_B)),
         \label{eq:BFL_cost}
    \end{equation}
where $N_B$ is the number of nodes that have a local copy of the blockchain and $l$ is related to the PoW difficulty (see Assumption 4). Its communication overhead is 
    \begin{equation}
        R\left(|w|m^2 + |w|m + m|w|N_B\right)
        \label{eq:BFL_comm}
    \end{equation}
\end{theorem}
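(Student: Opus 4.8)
The plan is to mirror the structure of the CFL analysis (Appendix~\ref{sub:cfl_proof}): I would isolate the cost of a single round of Algorithm~\ref{alg:FLchain}, write it as a sum of its constituent operations, and then invoke Assumption~6 to multiply by the $R$ rounds needed for convergence. Following the CFL convention, I treat the ``in parallel'' loop over the $m$ selected clients as total work, so the per-client costs are summed over the $m$ clients rather than taken as a wall-clock maximum; this is what produces the leading factor of $m$ in each client-side term.

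For the time complexity I would account for four groups of operations per round. First, the on-device aggregation on line~\ref{FLchain:update}: each client forms the weighted average $\sum_{j \in b^t}\frac{|D_j|}{|D|}w_j^t$ over the (up to) $m$ models in the latest block, which by Assumption~1 costs $O(m|w|)$ per client and hence $O(m^2|w|)$ across all $m$ clients, yielding the $|w|m^2$ term. Second, the call to \textsc{ClientUpdate} is identical to the CFL setting: $E$ epochs over at most $|D_{\max}|$ samples, each requiring a gradient evaluation of cost $O(|w|)$ by Assumption~3, giving $O(E|D_{\max}||w|)$ per client and $O(mE|D_{\max}||w|)$ in total. Third, \textsc{MineBlock} contributes $O(2^l)$ directly from Assumption~4. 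Fourth, \textsc{PropagateBlock} disseminates a block of size $S_B = O(m|w|)$ to the $N_B$ ledger nodes (Assumption~5), contributing $O(m|w|N_B)$. Summing these and applying Assumption~6 gives \eqref{eq:BFL_cost}.

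For the communication overhead I would count the number of model weights transmitted per round. Each of the $m$ clients downloads the full block of $m$ models ($m|w|$ each, so $m^2|w|$ in total), then submits a single local update of size $|w|$ (giving $m|w|$), and finally the freshly mined block of size $m|w|$ is propagated to the $N_B$ ledger nodes (giving $m|w|N_B$). Multiplying the sum of these three contributions by $R$ produces \eqref{eq:BFL_comm}.

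The main obstacle, and the point that most distinguishes this result from the CFL bound, is correctly attributing the quadratic factor $m^2$: unlike CFL, where aggregation and block handling happen once at the server, here both the block download and the on-device aggregation are repeated independently by every one of the $m$ clients, so a naive per-client count must be multiplied by $m$ a second time. A secondary subtlety is the fork-handling retry (the ``go to line~\ref{line:mine_b}'' branch): I would argue that re-mining after an invalid block only inflates the mining cost by a factor that is constant in expectation under a fixed fork probability, so it is absorbed into the $O(2^l)$ term and does not affect the stated asymptotics.
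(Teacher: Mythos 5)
Your proof is correct and follows essentially the same route as the paper's: a per-round decomposition into client-side costs (block download and on-device aggregation at $O(m|w|)$ per client, \textsc{ClientUpdate} at $O(E|D_{\max}||w|)$, and update submission at $O(|w|)$) summed over the $m$ clients, plus the mining cost $O(2^l)$ and the block-propagation cost $O(m|w|N_B)$, all multiplied by $R$ and reduced to dominant terms, with the identical three-part count ($m^2|w|$ downloads, $m|w|$ uploads, $m|w|N_B$ propagation) for the communication overhead. Your explicit treatment of the fork-retry branch as a constant expected inflation absorbed into $O(2^l)$ is a small addition the paper leaves implicit, but it does not change the argument or the result.
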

\begin{proof}
    See Appendix~\ref{sub:bfl_proof}.
\end{proof}

\begin{theorem}
The time complexity of GFL is 
    \begin{equation}
         O(RmE|D_{\max}||w|)
    \label{eq:GFL_cost}
    \end{equation}
and its communication overhead is 
\begin{equation}
    Rm|w|
\label{eq:GFL_comm}
\end{equation}
\end{theorem}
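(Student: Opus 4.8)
The plan is to mirror the CFL analysis of \eqref{eq:CFL_cost}, since the per-client training in GFL relies on the very same \textsc{ClientUpdate} routine, and then account for the additional \textsc{Merge} step and the different, sequential communication pattern. I would begin by fixing a single round $t$ and tracing the work performed in the inner loop over the sequence $[k_1, \ldots, k_m]$, which executes exactly $m$ times, once per selected node.

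First I would cost the \textsc{Merge} procedure. Since it computes the element-wise average $w \leftarrow (w + w')/2$ of two weight vectors, Assumption~1 gives $O(|w|)$ scalar operations; updating the cache $\mathrm{lastModel}_{k_i}$ is likewise an $O(|w|)$ copy. Next I would invoke the \textsc{ClientUpdate} cost already established for CFL: with $E$ local epochs over $|D_{k_i}|/B$ batches of size $B$, and using Assumption~3 that one gradient evaluation costs $O(|w|)$ per sample, each call costs $O(E|D_{k_i}||w|)$. Bounding $|D_{k_i}| \le |D_{\max}|$ uniformly, a single node therefore contributes $O(|w|) + O(E|D_{\max}||w|) = O(E|D_{\max}||w|)$, since $E|D_{\max}| \ge 1$ absorbs the merge and cache terms.

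Summing over the $m$ nodes of the sequence yields a per-round cost of $O(mE|D_{\max}||w|)$, and multiplying by the $R$ rounds of Assumption~6 gives the claimed complexity $O(RmE|D_{\max}||w|)$ of \eqref{eq:GFL_cost}. For the communication overhead I would count model transmissions per round: unlike CFL, where each selected client both downloads and uploads a model (the source of the factor $2$ in \eqref{eq:CFL_comm}), in GFL the model is relayed sequentially, so each of the $m$ nodes performs a single ``Send model to the next client'' of size $|w|$. This produces $m|w|$ per round and $Rm|w|$ in total, matching \eqref{eq:GFL_comm}.

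The steps are largely routine once the CFL bound is in hand; the point requiring care is the communication count. One must justify that the sequential hand-off incurs only one transmission per node rather than the two (download plus upload) of the star-topology CFL, and that the \textsc{Merge} and cache operations, while adding $O(m|w|)$ per round, are dominated by the $O(mE|D_{\max}||w|)$ training cost and hence leave the asymptotic time complexity unchanged.
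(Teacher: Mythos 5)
Your proposal is correct and follows essentially the same route as the paper's proof: both decompose each node's work per round into \textsc{Merge} (cost $O(|w|)$), \textsc{ClientUpdate} (cost bounded via $|D_{k_i}|\le|D_{\max}|$ exactly as in the CFL analysis), and a single model hand-off of size $|w|$, then sum over the $m$ nodes and $R$ rounds and observe that the training term dominates. The communication count is also identical in spirit: one transmission per node per round because each client sends only to its successor in the sequence, yielding $Rm|w|$.
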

\begin{proof}
    See Appendix~\ref{sub:gfl_proof}.
\end{proof}

\begin{table*}
\caption{Computational complexity and communication overhead for CFL, BFL and GFL.}
\centering
    \begin{tabular}{c|cc}
        \toprule
        \textbf{Algorithm} & \textbf{Time complexity} & \textbf{Communication Overhead} \\
        \midrule
        CFL & $O(RmE|D_{\max}||w|)$  & $2Rm|w|$ \\
        BFL &  $O(R (|w|m^2 + E|D_{\max}||w|m + 2^l + m|w|N_B))$ & $R(|w|m^2 + |w|m + m|w|N_B)$\\
        %BFL (optimized) & $O(R(mE|D_{max}||w|+2^l+N_B|w|))$ & $R(2|w|m+N_B|w|)$\\
        GFL & $O(RmE|D_{\max}||w|)$ & $Rm|w|$\\
        \bottomrule
    \end{tabular}
    \label{tab:th_comparison}
\end{table*}
The three algorithms have a time complexity that depends on the dataset size $|D_{\max}|$. 
Additionally, the time complexity of BFL~\eqref{eq:BFL_cost} is also a function of the blockchain parameters $N_B$ and $l$. In particular, the dominant term in~\eqref{eq:BFL_cost} is $R2^l$. 
Hence, the time complexity of BFL is exponential in the PoW difficulty $l$, while for CFL and GFL is polynomial in $RmE|D_{\max}||w|$.
Table~\ref{tab:th_comparison} summarizes the different results obtained for time complexity and communication overhead.

To finalize our analysis, we compute the total execution time of each algorithm till convergence (convergence time) as a function of the delay introduced by the communication rounds and the computational operations. 
To do that, we characterize the links whereby the different types of nodes exchange information (e.g., local model updates, blocks), having in mind the topology introduced in Fig.~\ref{fig:communication}.
We classify two different types of connections: cloud (solid arrows) and edge (dotted and dashed arrows). Cloud connections (assumed to be wired) are used by miners in the blockchain; instead, edge connections (assumed to be wireless) are used by edge nodes to upload/download models.
Given its popularity and easiness of deployment, we adopt IEEE 802.11ax links for edge connections~\cite{bellalta2016ieee}. Since edge devices are often energy-constrained, we consider different values of transmission power for the edge connections. The central server and blockchain node use a transmission power of $P_{\text{TX}}^{c}$, instead, edge devices use $P_{\text{TX}}^{e}$, with $P_{\text{TX}}^{e} \leq P_{\text{TX}}^{c}$. The wired connection has a capacity $C_{P2P}$.
Additionally, we identify three main types of computational operations during the federated learning processes: local model training, model parameters exchange, and blockchain data sharing. Based on this, we can compute the convergence time of CFL, BFL, and GFL as follows:
\begin{equation}
    T_{\rm CFL} = T_{\rm train} + T_{\rm{Tx}}^{e} + T_{\rm{Tx}}^c,
    \label{eq:Tcfl}
\end{equation}
\begin{equation}
    T_{\rm BFL} = T_{\rm BC} + T_{\rm train} + T_{\rm{Tx}}^{e}+ T_{\rm{Tx}}^{c},
    \label{eq:Tbfl}
\end{equation} 
\begin{equation}
    T_{\rm GFL} = T_{\rm train} + T_{\rm{Tx}}^{e},
    \label{eq:Tgfl}
\end{equation}
where $T_{\rm train}$ is the total amount of time spent for training the ML model locally, $T_{\rm{Tx}}^{c/e}$ is the total transmission time of the central server/blockchain nodes $(c)$ or the edge devices $(e)$, and computed according to the model detailed in Appendix~\ref{sec:edge_conn_model}. $T_{\rm BC}$ is the total delay introduced by blockchain and described in steps~\ref{process:BC_start}-\ref{process:BC_end} of the process in Section~\ref{sub:FLchain}. 
\section{Energy Footprint}
\label{sec:energy_communication_model}

In this section, we define the models used to characterize the energy consumption that results from the FL operations. Driven by \eqref{eq:Tcfl},  \eqref{eq:Tbfl} and  \eqref{eq:Tgfl}, the total amount of energy consumed in each scenario is:
\begin{equation}
    \mathcal{E}_{\rm CFL} = \mathcal{E}_{\rm train} + \mathcal{E}_{\rm{Tx}}^{e} + \mathcal{E}_{\rm{Tx}}^{c},
    \label{eq:Ecfl}
\end{equation}
\begin{equation}
    \mathcal{E}_{\rm BFL} = \mathcal{E}_{\rm BC} + \mathcal{E}_{\rm train} + \mathcal{E}_{\rm{Tx}}^{e} + \mathcal{E}_{\rm{Tx}}^{c},
    \label{eq:Ebfl}
\end{equation}
\begin{equation}
    \mathcal{E}_{\rm GFL} = \mathcal{E}_{\rm train}+ \mathcal{E}_{\rm{Tx}}^{e},
    \label{eq:Egfl}
\end{equation}
where $\mathcal{E}_{\rm train}$ is the energy consumed by all the nodes during the local training,
and $\mathcal{E}_{\rm{Tx}}^{c/e}$ the energy required to transmit the model through the IEEE 802.11ax wireless links during the whole algorithm execution, from either a central server/blockchain node ($c$) or an edge device ($e$). %We assume negligible the energy consumed for merging the models.
$\mathcal{E}_{train}$ is calculated as:
\begin{equation}
    \mathcal{E}_{\rm train} = \sum_{r = 0 }^{R-1} \sum_{i \in \mathcal{S}^r} P_{{\rm CPU}_i}^{r}\Delta_i^r,
\end{equation}
where $P_{{\rm CPU}_i}^{r}$ is the average power consumed by the CPU and DRAM during a round $r$, and $\Delta_i^r$ is the duration of the operation for the $i-th$ client. 
As described in Section~\ref{sec:theoretical_analysis}, we may have two types of communication links: cloud and edge. Considering that cloud links are wired, we assume that their energy consumption is negligible.
Instead, we compute the energy consumption of the edge connections according to the following equation:
\begin{equation}
    \mathcal{E}_{\rm{Tx}}^{c/e} = T_{\rm{Tx}}^{c/e}P_{\rm{Tx}}^{c/e},
\end{equation}
where $T_{\rm{Tx}}^{c/e}$ and $P_{\rm{Tx}}^{c/e}$ are the transmission time and power of a central server/miner $(c)$ or an edge device $(e)$.
The additional term $\mathcal{E}_{\rm BC}$ for BFL is associated to mining operations of the PoW. We measure that consumed energy based on the model proposed in~\cite{lasla2020green} and according to the following equation:
\begin{equation}
    \mathcal{E}_{\rm BC} = P_{h}\frac{1}{\lambda}N_{\rm chain},
    \label{eq:EPow}
\end{equation}
where $P_{h}$ is the total hashing power of the network, $\lambda$ is the block generation rate, and $N_{\rm chain}$ is the number of blocks on the main chain.

\section{Performance Evaluation}
\label{sec:exp_settings}
In this section, we first describe the experimental settings adopted to compare the three federated approaches and then, we present numerical results. 

\subsection{Simulation Setup}

We use the \textit{Extended MNIST (EMNIST)} dataset available on \textit{Tensorflow Federated (TFF)} library~\cite{emnist}. The input features are black and white images that represent handwritten digits in $[0,9]$, coded in a matrix of $28 \times 28$ pixels.
Considering only digits, it contains $341\,873$ training examples and $40\,832$ test samples, both divided across $3\,383$ users. The training and the test sets share the same users' list so that each user has at least one sample.  
In the EMNIST dataset, all the clients have a rich number of samples for all the classes, thus data distribution across them can be considered as IID. To evaluate the targeted federated mechanisms in more challenging settings, we create a new version of the EMNIST dataset, called EMNISTp, by randomly restricting each client dataset to $3$ classes only. Fig.~\ref{fig:dataset_distribution} shows the available samples of the first $4$ clients, for both versions of the dataset.

\begin{figure}[ht!]
    \centering
    \includegraphics[width=\columnwidth]{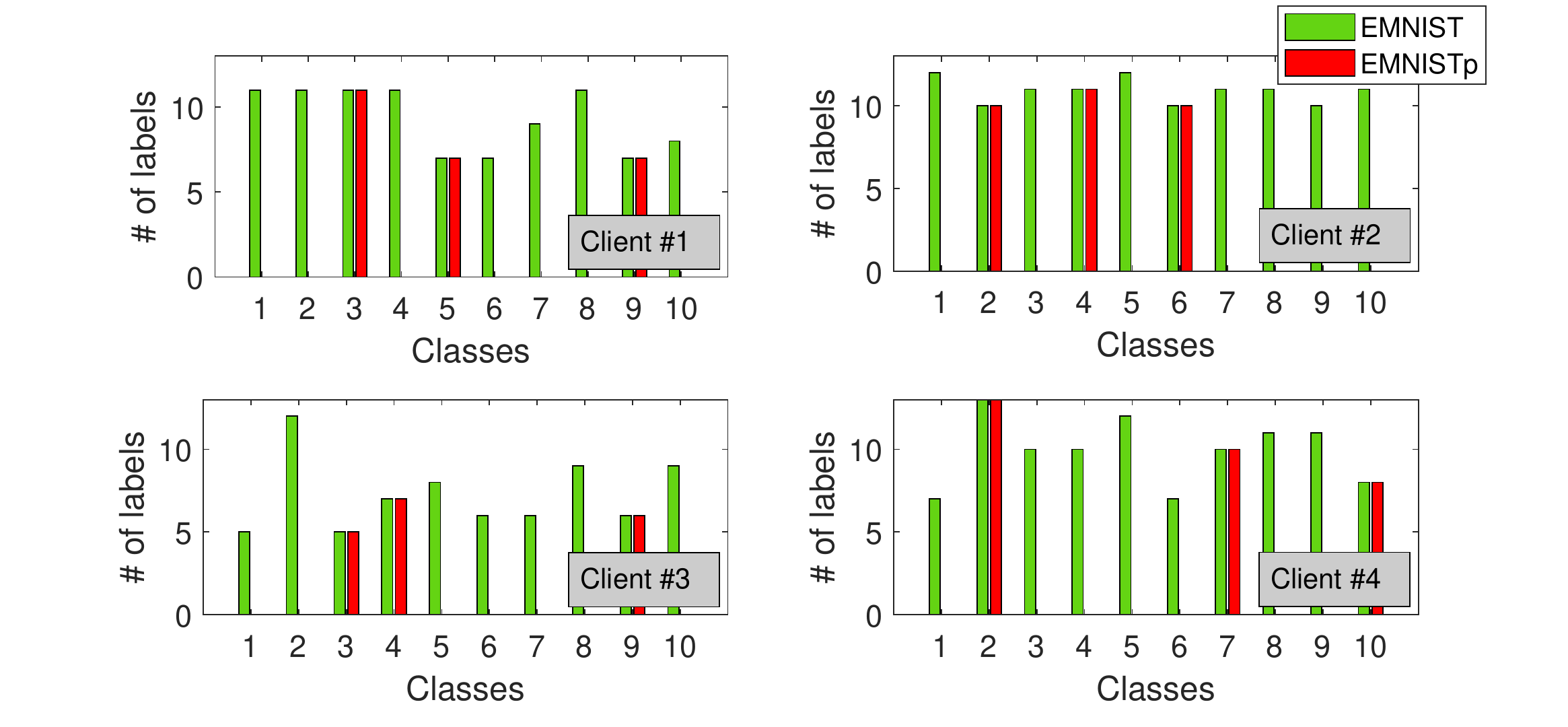}
    \caption{Distribution of samples across the four first clients for both EMNIST and EMNISTp federated datasets.}
    \label{fig:dataset_distribution}
\end{figure}

To correctly classify these samples we choose two models proposed in~\cite{mcmahan2017communication}.
The first one is a feed-forward neural network (FFNN) with an input layer of $784$ neurons, two hidden layers of $200$ neurons activated with the rectified linear unit (ReLU) function, and an output layer of $10$ neurons with \emph{Softmax} activation function. The number of trainable parameters for the FFNN $(|w^\prime|)$ is $\num{199210}$. Assuming that each parameter requires $4$ bytes in memory, i.e., size of a \emph{float32} variable, the total amount of space required ($S_{w^{\prime}}$) is $796.84$ KB. The second one is a convolutional neural network (CNN) with the following structure:
\begin{enumerate}
    \item A $5 \times 5$ convolutional layer with $32$ channels and ReLU activation function.
    \item A $2 \times 2$ max pooling layer.
    \item A $5 \times 5$ convolutional layer with $64$ channels and ReLU activation function.
    \item A $2 \times 2$ max pooling layer.
    \item A fully connected layer with $512$ units and ReLu activation.
    \item An output layer with $10$ neurons and Softmax activation function.
\end{enumerate}
In total, the number of trainable parameters for the CNN $(|w''|)$ is $\num{582026}$ and the size in memory $(S_{w''})$ is $2.33$ MB.
We opted for a FFNN to reproduce a realistic scenario where edge devices might not have enough computational power to train more sophisticated deep learning mechanisms, like NNs based on convolutional architectures. Despite of its simplicity, the selected FFNN model accurately classifies the digits of the EMNIST dataset, as shown next. Then, we use also a CNN to evaluate the multiple algorithms' performance using a more complex model (details in Section \ref{sub:results_cnn}).

The three FL algorithms are implemented with \textit{Tensorflow (TF)}~\cite{tensorflow2015-whitepaper}, \textit{Tensorflow Federated (TFF)}~\cite{tff} and \textit{Keras}~\cite{noauthor_keras:_nodate} libraries. 
We have extended the Bitcoin model provided by BlockSim~\cite{10.3389/fbloc.2020.00028} to simulate the blockchain behavior. BlockSim is an event-based simulator that characterizes the operations carried out to store data in a blockchain, from the submission of transactions to mining blocks and reaching consensus in a decentralized manner. Accordingly, BlockSim allows simulating the delays added by the blockchain in a BFL application, i.e., the $T_{BC}$ parameter defined in Section~\ref{sec:theoretical_analysis}.

We create a validation set by choosing a subset of $200$ clients from the test set. 
Following the TFF documentation~\cite{tff_tutorial}, the training accuracy is computed at the beginning of each round; instead the validation accuracy is calculated at the end. For this reason may happen that the validation accuracy is higher than the training one.
At the end of each simulation, we evaluate the performance on the test set.

As for $\mathcal{E}_{\rm train}$ and $T_{\rm train}$, we have used Carbontracker~\cite{anthony2020carbontracker}, a Python library that periodically samples the hardware energy consumption and measures the execution time. %\paolo{NOTE: what about $T_{BC}$?} 
Moreover, $P_{\rm Tx}^c$ is set to $20\text{ dBm}$ and $P_{\rm Tx}^e=9\text{ dBm}$.
Table~\ref{tab:sim_param} reports all the other parameters used in our simulations.
We note here that we have used the same FL parameters for a fair comparison, being the number of rounds $R$ of CFL and GFL equivalent to the main chain's length ($N_{\rm chain}$) in BFL. In such a way, we guarantee that the number of global rounds of each learning algorithm is the same. We run the experiments with the following hardware configurations: Intel i5-6600 with $8\text{GB}$ of RAM (HW1) and two Intel Xeon 6230 with $188\text{GB}$ of RAM (HW2). 

\begin{table}[ht]
    \caption{Simulation parameters.}
    \label{tab:sim_param}
    \centering
    \resizebox{\columnwidth}{!}{  
    \begin{tabular}{c|ccc}
        \toprule
           & \textbf{Parameter} & \textbf{Description} & \textbf{Value} \\
         \midrule
          \multirow{11}{*}{\rotatebox[origin=c]{90}{Fed. Learning }} 
          & $|w'|$ & Number of FFNN model parameters & $\num{199210}$ \\
          & $|w''|$ & Number of CNN model parameters & $\num{582026}$ \\
          & $S_{w^'}$ & FFNN model parameters size & $796.84$ KB \\
          & $S_{w''}$ & CNN model parameters size & $2.33$ MB \\
          & $\eta$ & Learning rate & $0.2$ \\
          & $N$ & Number of total clients & $3382$ \\
          & $E$ & Local epochs number & $5$ \\
          & $R$ & Number of rounds & $200$ \\
          & $m$ & Number of clients for each round & $200$ \\
          & $B$ & Batch size & $20$ \\
          & $\ell_i$ & Local loss function & Sparse Cat. Crossentropy\\
          \midrule
          \multirow{11}{*}{\rotatebox[origin=c]{90}{Blockchain}}
          %& $\lambda$ & Block Generation rate & \\
          & $N_{\rm chain}$ & Number of blocks in the main chain & 200 \\
          & $BI$ & Block interval & $15$ s\\
          & $N_{B}$ & Number of blockchain nodes & $200$\\
          & $N_{m}$ & Number of miners & $10$\\
          & $C_{\rm P2P}$ & Capacity of P2P links & $100$ Mbps\\
          & $S_H$ & Block header size & $25$ KB\\ 
          & $S'_B$ & Block size with FFNN & $160.368$ MB \\
          & $S''_B$ & Block size with CNN & $467$ MB \\
          & $S'_{\rm tr}$ & Transaction size with FFNN & $796.84$ KB\\
          & $S''_{\rm tr}$ & Transaction size with CNN & $2.33$ MB\\
          & $P_h$ & Total hashing power & $1350$ W\\
         \midrule
         \multirow{17}{*}{\rotatebox[origin=c]{90}{Communication (IEEE 802.11ax)}} 
         & $P_{\rm{Tx}}^{e}$ & Tx power for edge devices & $9$ dBm \\
         & $P_{\rm{tx}}^{c}$ & Tx power for a central server & $20$ dBm\\
         & $\sigma_{\text{leg}}$ & Legacy OFDM symbol duration & $4$ \textmu s \\
 		 %& $\sigma$ & OFDM symbol duration & 16 \textmu s \\
 		 & $N_{\rm sc}$ & Number of subcarriers ($20$ MHz) & $234$ \\
		 & $N_{\rm ss}$ & Number of spatial streams & $1$ \\
         & $T_{\rm{e}}$ & Empty slot duration & $9$ \textmu s \\
		 & $T_{\rm{SIFS}}$ & SIFS duration & $16$ \textmu s \\ 
		 & $T_{\rm{DIFS}}$ & DIFS duration & $34$ \textmu s \\ 
		 & $T_{\rm{PHY}}$ & Preamble duration & $20$ \textmu s \\
		 & $T_{\rm{HE-SU}}$ & HE single-user field duration & $100$ \textmu s \\
		 %& $T_{\text{ACK}}$ & ACK duration & 28 \textmu s \\ 
		 & $L_{s}$ & Size OFDM symbol & $24$ bits \\ 
		 % & $L_{\text{DATA}}$ & Data packet size & 11728 bits \\ 
		 %& $N_{\text{agg}}$ & No. of frames in an A-MPDU & 1 \\
		 & $L_{\rm{RTS}}$ & Length of an RTS packet & $160$ bits \\ 
		 & $L_{\rm{CTS}}$ & Length of a CTS packet & $112$ bits \\
 		 & $L_{\rm{ACK}}$ & Length of an ACK packet & $240$ bits \\ 
		 & $L_{\rm{SF}}$ & Length of service field & $16$ bits \\ 
		 & $L_{\rm{MAC}}$ & Length of MAC header & $320$ bits \\
		 & $\text{CW}$ & Contention window (fixed) & $15$ \\
        \bottomrule
    \end{tabular}}

\end{table}

\subsection{Result Analysis}

Table~\ref{tab:resultsEMNIST} reports the accuracy of each algorithm implementation with the FFNN model on the two considered datasets executed on HW1. CFL and BFL achieve the best accuracy (both close to $0.9$), instead, GFL presents lower values. This result validates the claim that, under similar setups as in our simulations (i.e., each block contains $m$ local updates organized in transactions), the central parameter server of CFL can be replaced by a blockchain network, properly dimensioned, without compromising the learning accuracy.
On the other hand, GFL achieves a validation accuracy around $0.5$ after $200$ rounds. We justify this behavior by noticing that, before the \textsc{ClientUpdate} procedure, the model received from the previous node in the sequence is merged with that in the previous round. For the earliest training rounds, there is a high probability that the merging procedure is with a fresh model that has never been trained, hence disrupting the knowledge from the previous clients. We analyze more in-depth this phenomenon in Section~\ref{sub:GossipOpen}.

\begin{table*}[ht]
\caption{FFNN simulation results on EMNIST (EMNISTp) datasets.}
\label{tab:resultsEMNIST}
\centering
\scriptsize
    \begin{tabular}{cccccccc}
    \toprule
        & \textbf{Acc. Training} & \textbf{Acc. Validation} & \textbf{Acc. Test}   & \textbf{Conv. Time (s)}            & \textbf{Comp. Energy (\%)} & \textbf{Tot. Energy (Wh)} & \textbf{Comm. Overhead (GB)} \\ \midrule
        CFL         & $0.9$ $(0.76)$    & $0.87$ $(0.77)$     & $0.86$ $(0.76)$ & $46458.56$ $(45571.86)$ & $98.91$ $(98.61)$     & $21.84$ $(17.2)$ & $63.75$ \\
        BFL     & $0.88$ $(0.74)$   & $0.87$ $(0.78)$     & $0.86$ $(0.77)$ & $51036.87$ $(50077.75)$ & $99.98$ $(99.98)$       & $1147.21$ $(1142.65)$ & 12781.31\\
        GFL          & $0.44$ $(0.36)$   & $0.42$ $(0.12)$     & $0.41$ $(0.11)$ & $38201.67$ $(36821.67)$ & $99.57$ $(99.14)$     & $17.83$ $(8.98)$ & $31.87$\\ 
        \bottomrule
    \end{tabular}
    
\end{table*}

Table~\ref{tab:resultsEMNIST} also details the convergence time
of each algorithm, the percentage of energy consumed in the computations (as a percentage of the total energy consumed), the total amount of energy needed and the communication overhead (i.e., data to be shared during the rounds of the algorithms). 
The fastest and the most energy-efficient algorithm is GFL: it is able to save the 18\% of training time, the 18\% of energy, and the 51\% of data to be shared with respect to the CFL solution. However, GFL main drawback resides in the poor accuracy achieved, as stated above. BFL is the slowest and the most energy-hungry federated implementation, mainly due to the overhead introduced by the blockchain network to secure data in a decentralized way.
Additionally, it is to be noted that computation is the most energy-consuming task for CFL, BFL, and GFL. For BFL, the mining process drains $1125$ Wh, i.e., $98\%$ of the total energy.
We highlight here that our comparison may be unfair in this respect, since both CFL and GFL are not including any security mechanism. However, we believe that it is worth to include BFL in our analysis on distributed versus centralized federated learning because, from our results, it emerges that the secure and decentralized method introduced by the blockchain network, despite increasing algorithm costs, does not jeopardize its accuracy compared to its centralized counterpart CFL.
Finally, GFL is the implementation that requires the lowest communication overhead. To be more precise, in this case, we need to include an extra cost to share the global model across the nodes at the end of the last round (not considered in the table), which is approximately 0.16 GB (the cost of one extra round)

\section{Open Issues and Research Directions}
\label{sec:open-aspects}

\subsection{Open Aspects of GFL}\label{sub:GossipOpen}
\begin{table*}[ht]
\caption{FFNN simulation results of GFL on EMNIST (EMNISTp) datasets with higher number of rounds and local computations.}
\label{tab:resultsGL}
\centering
\scriptsize
    \begin{tabular}{cccccccc}
\toprule
\textbf{R} & \textbf{E} & \textbf{Acc. Training} & \textbf{Acc. Validation} & \textbf{Acc. Test} & \textbf{Conv. Time (s)}     & \textbf{Comp. Energy (\%)} & \textbf{Tot. Energy (Wh)} \\ \midrule
$200$        & $5$          & $0.44$ $(0.36)$            & $0.42$ $(0.12)$              & $0.41$ $(0.11)$        & $36401.67$ $(35021.67)$   & $99.59$ $(99.19)$            & $17.83$ $(8.98)$           \\
$400$        & $5$          & $0.55$ $(0.41)$            & $0.51$ $(0.16)$              & $0.5$ $(0.15)$         & $72791.81$ $(70029.31)$   & $99.58$ $(99.18)$              & $35.08$ $(17.91)$          \\
$800$        & $5$          & $0.85$ $(0.64)$            & $0.67$ $(0.19)$              & $0.66$ $(0.17)$        & $145651.71$ $(140069.23)$ & $99.59$ $(99.18)$              & $70.95$ $(35.8)$           \\
$200$      & $10$         & $0.57$ $(0.58)$            & $0.4$ $(0.09)$               & $0.41$ $(0.1)$         & $37377.47$ $(35448.58)$   & $99.7$ $(99.38)$              & $24.55$ $(11.77)$            \\
$400$        & $10$         & $0.66$ $(0.37)$            & $0.47$ $(0.17)$              & $0.48$ $(0.16)$        & $74989.72$ $(70857.46)$   & $99.7$ $(99.37)$              & $49.17$ $(23.23)$          \\
$800$        & $10$         & $0.94$ $(0.58)$          & $0.67$ $(0.3)$               & $0.67$ $(0.29)$        & $149448.14$ $(141730.77)$ & $99.7$ $(99.38)$              & $98.37$ $(46.87)$\\
\bottomrule
\end{tabular}

\end{table*}

As described before, GFL is not able to achieve the same accuracy level as CFL and BFL. We identify two possible reasons for this behavior:
\begin{enumerate}
    \item The number of rounds is not enough to converge: the number of visited nodes might not be sufficient to hit an acceptable accuracy.
    \item The merge step negatively impacts the performance of the learning algorithm: the model received in the previous round and stored in the local cache slows down the learning process. 
\end{enumerate}  

To verify the first hypothesis, we execute GFL algorithm, with the FFNN model on HW1, changing the number of rounds ($R = \{200,400,800\} $) and varying the number of local computations ($E = \{5,10\}$). Table~\ref{tab:resultsGL} shows the results obtained. Considering the EMNIST dataset, the best results are achieved with $R=800$ and $E=\{5,10\}$, i.e., a higher test accuracy of $0.66$, but still lower than CFL and BFL. Moreover, the model is overfitting with $R=800$ rounds; hence, a regularization method would be needed, when increasing the number of rounds.
On the EMNISTp dataset, the accuracy is even lower for each combination of the hyperparameters tested. 

To verify the second hypothesis, we run GFL algorithm without the merge step (\emph{GFL-NM}). The pseudocode of this algorithm is the same in Algorithm~\ref{alg:gfl} but replacing the old Line~\ref{l:merge_gfl} with the new command $w_{k_i}^{t} \leftarrow w_{k_{i-1}}^t$. Thus, in GFL-NM, given a sequence of clients $\mathcal{S}_t$ the model is trained incrementally on the client's datasets.
GFL-NM achieves a training accuracy of $\sim1.0\text{ }(0.94)$, a validation accuracy of $0.94 \text{ }(0.78)$ and a test accuracy of $0.93 \text{ }(0.78)$ on the EMNIST (EMNISTp) dataset (see Fig.~\ref{fig:accuracy_GFL_noM}), higher than CFL and BFL. 
These results suggest that the \textsc{Merge} step compromises the training performance. In fact, at the beginning of the learning process, there is a high probability that a model visits a node that has never been visited before and with $\rm lastModel$ storing initialization values. In this case, the received model is merged with a model that has never been trained before, as shown in Algorithm~\ref{alg:gfl}, which negatively impacts the resulting merged weights.
Figure~\ref{fig:GFL_vs_GFLNM} shows the comparison between the learning curves of GFL and GFL-NM.
\begin{figure}[ht]
    \begin{subfigure}[b]{0.45\columnwidth}
        \centering
        \includegraphics[width=\columnwidth]{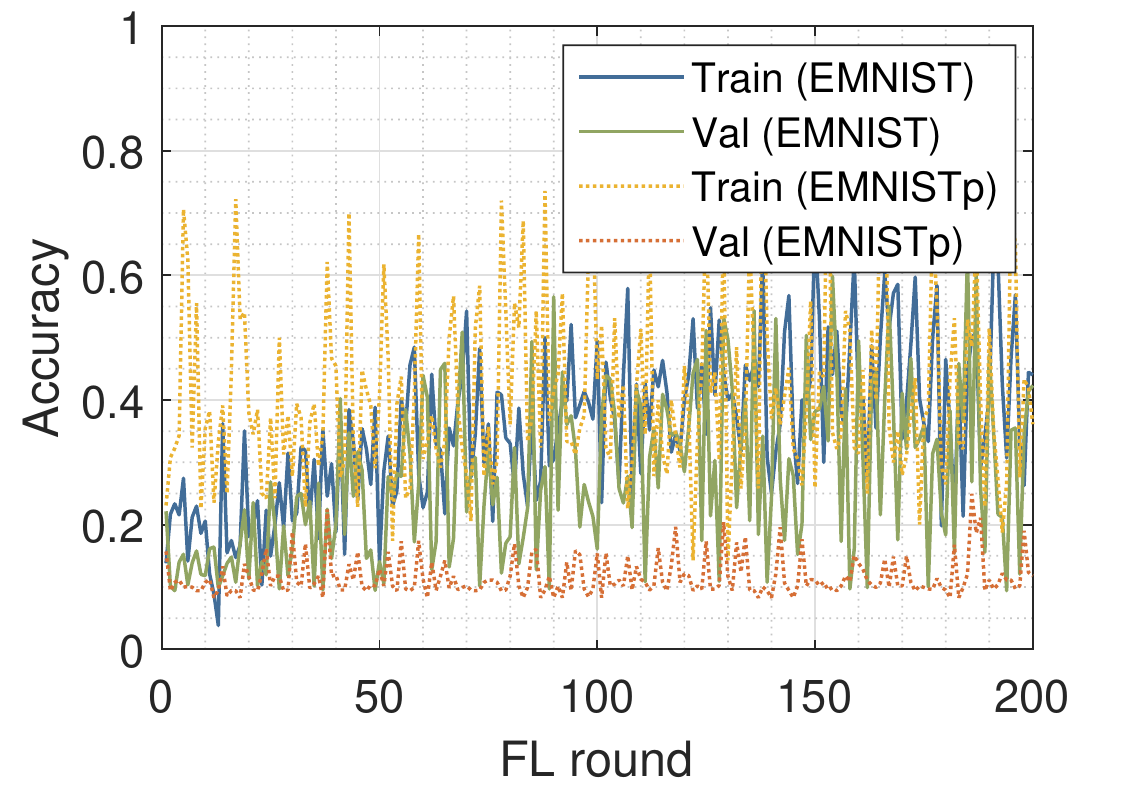}
        \caption{GFL}
        \label{fig:accuracy_GFL}
    \end{subfigure}
    \hfill
    \begin{subfigure}[b]{0.45\columnwidth}
        \centering
        \includegraphics[width=\columnwidth]{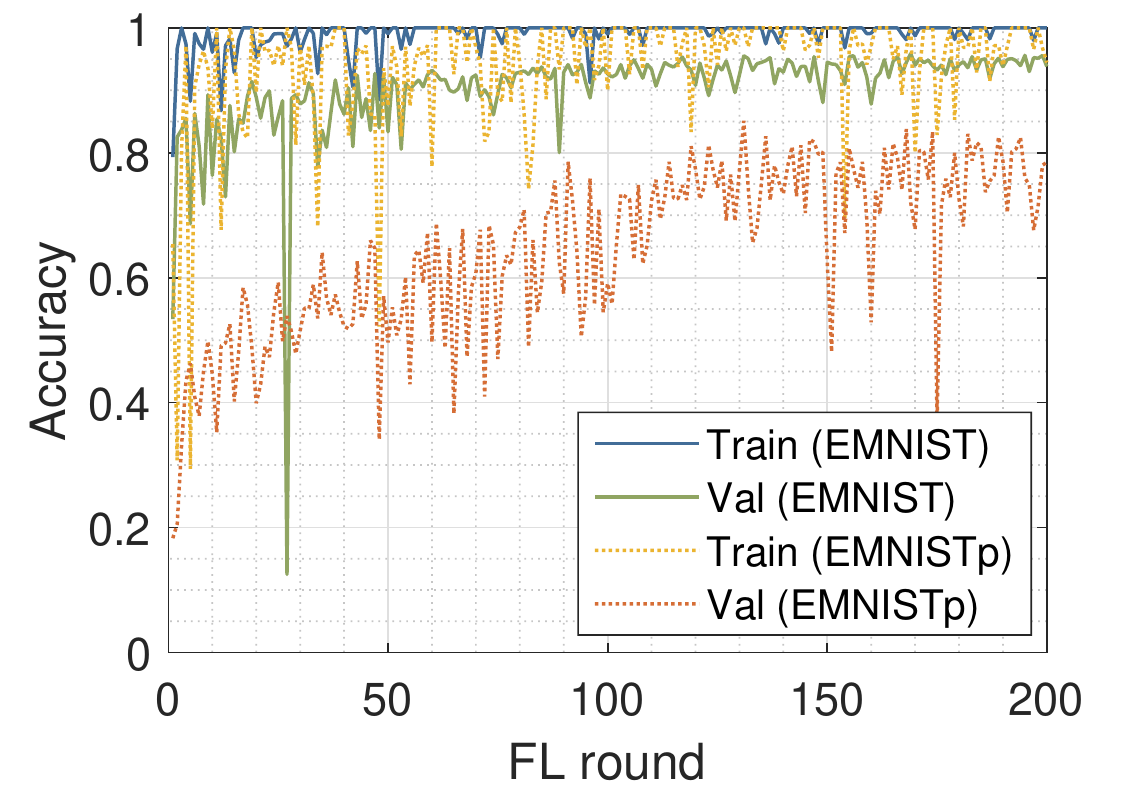}
        \caption{GFL-NM}
        \label{fig:accuracy_GFL_noM}
    \end{subfigure}
    \caption{Training and validation accuracy on EMNIST and EMNISTp}
    \label{fig:GFL_vs_GFLNM}
\end{figure}

In conclusion, we have seen that both 1) and 2) influence the achieved accuracy. Moreover, GFL-NM solves the accuracy problem of standard GFL and reaches the best performance from all the metrics point of view. 
In our opinion, and encouraged by our results, the investigation of new methods for merging the model updates from the distributed sources to achieve faster and higher accuracy is an interesting and open research line.
To the best of our knowledge, there are still very few works that go in this direction in the literature.
In \cite{miozzo2021distributed}, the authors implement an incremental version of GFL with a single round on the edge devices using the entire local dataset, and, hence, without requiring any merge step. Similarly, \cite{huang2022continual} proposes an iterative continual learning algorithm, where a model is trained incrementally on the local datasets without applying any merge operation.

\subsection{Open Aspects of BFL}

Blockchain technology, while enabling a reliable and secure FL operation, entails very high overheads in terms of time and energy for the sake of keeping decentralization. The performance of a blockchain, typically measured in transactions per second (tps), together with the granted degree of security, strongly depends on the nature of the blockchain (e.g., degree of visibility, type of consensus, mining protocol), its configurable parameters (e.g., block interval, difficulty), and the size of the P2P network maintaining it. Furthermore, the necessary energy to maintain a blockchain is correlated to its performance in tps and security, thus leading to the well-known performance, security, and energy trilemma.

To showcase the effect of using different types of blockchain networks, Fig.~\ref{fig:delay_blocks_fork_prob} shows the total delay incurred by the blockchain to the FL operation to generate up to 200 blocks under different blockchain configurations. Notice that, in the proposed setting, each block is equivalent to an FL round. In particular, we vary the total number of miners ($N_m=\{1, 10, 100\}$) and the block interval ($BI=\{5,15,600\}$ s), which affect the time required to achieve consensus. 

First, a higher number of miners leads to a higher fork probability, provided that more nodes need to agree on the same status of the ledger. By contrast, a higher block interval allows mitigating the effect of forks, since the probability that two miners mine a block simultaneously is lower~\cite{shahsavari2019theoretical}.

\begin{figure}[ht]
    \includegraphics[width = \columnwidth]{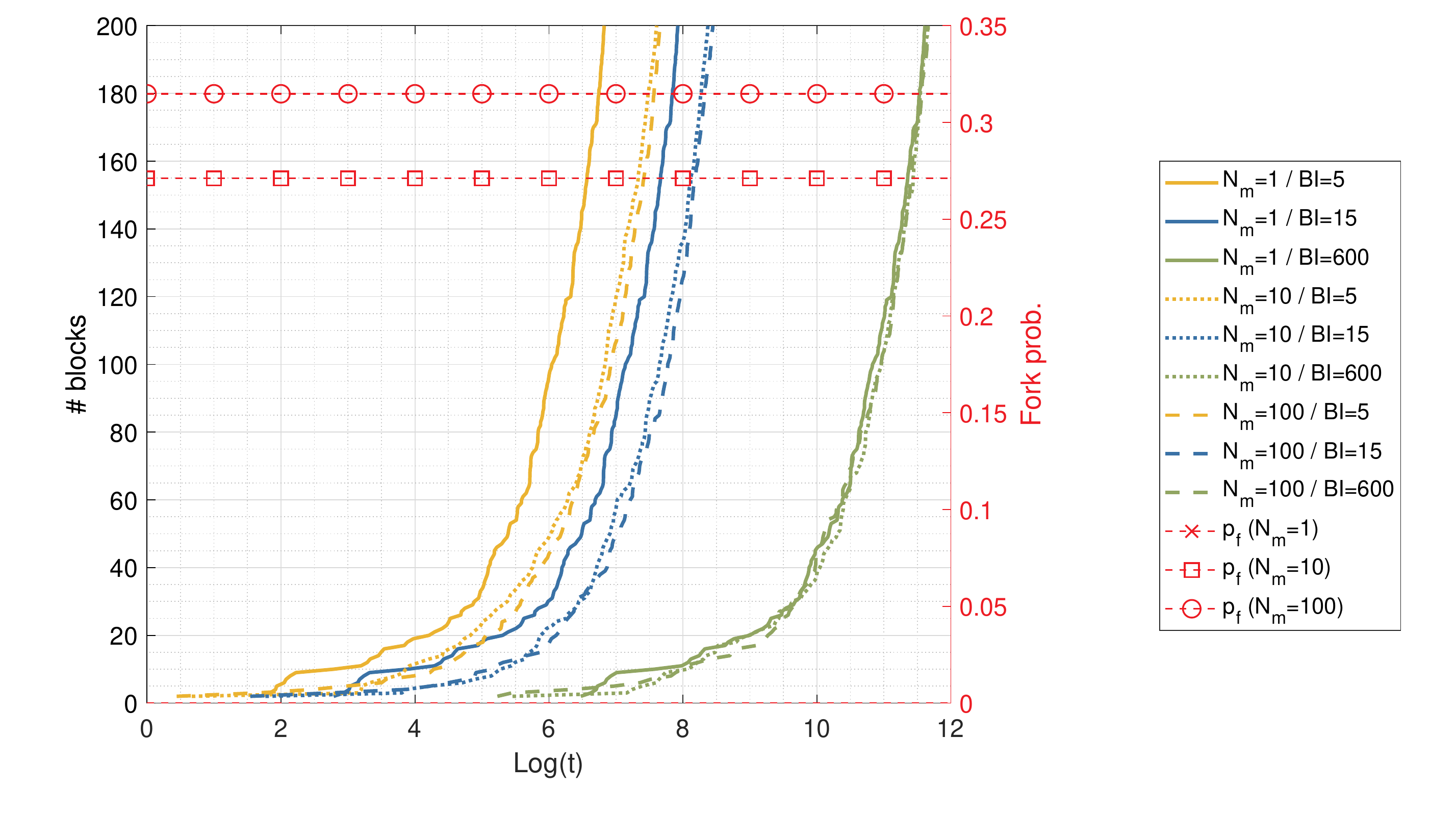}
    \caption{Blockchain delay as a function of the number of miners ($N_m$) and the block interval ($BI$). The fork probability associated with each $N_m$ is shown in red.}
    \label{fig:delay_blocks_fork_prob}
\end{figure}

As shown in Fig.~\ref{fig:delay_blocks_fork_prob}, the blockchain delay increases with the block interval ($BI$), which indicates the average time for mining a block. Notice that, in a PoW-based blockchain, the block interval is fixed by tuning the mining difficulty according to the total computational power of miners. As for the impact of $N_m$ on the delay, its effects on the delay are more noticeable for low $BI$ values. In particular, a higher fork probability is observed as $N_m$ increases, thus incurring additional delays to the FL application operating on top of the blockchain.

To optimize the performance of a blockchain, a widely adopted approach consists of finding the best block generation rate~\cite{kim2019blockchained}, which is controlled by tuning the mining difficulty. Other approaches consider optimizing the block size~\cite{wilhelmi2022end}, which fits better scenarios where the intensity of transaction arrivals depends on the nature of the application running on top of the blockchain (e.g., FL updates provided by clients).

Regarding the communication cost of BFL, it can be improved by leveraging the computational capacity of blockchain miners to speed up the FL operation. In particular, instead of including individual local models in a block, each block can bring a global model, aggregated by the miner responsible for building the block. This approach has been widely adopted in the literature (see, e.g.,~\cite{pokhrel2020federated}), and would lead to a reduced time complexity and communication cost (see Appendix~\ref{sub:bfl_proof}).

Finally, another important open aspect regarding blockchain-enabled FL lies in the implications of decentralization on the learning procedure. In this paper, we have assumed that the blockchain is perfectly shared and accessed by FL devices to carry out training, thus acting as a central orchestrating server. However, the decentralized data sharing in blockchain naturally leads to model inconsistencies, provided that different FL devices can use the information from different blocks to compute local model updates.

\subsection{Model and implementation dependencies}
\label{sub:results_cnn}
Table~\ref{tab:resultsEMNISTCNN_scavengesim} and Table~\ref{tab:resultsEMNISTCNN_iesc} report the performance of the CNN model on EMNIST (EMNISTp) datasets executed on two different platforms HW1 and HW2, respectively. Similar to the previous FFNN case, BFL is the slowest and the most energy demanding algorithm. Instead in this case, GFL reaches higher validation accuracy on EMNIST, i.e., $0.8$, but is still not able to get the performance of the other two algorithms. Moreover, using CNN, GFL is the fastest algorithm and saves up to $16\%$ of the execution time, with respect to CFL on HW1. Hence, model selection plays a key role for the algorithm performance and may facilitate the training process, as in the case of GFL. Finally, it is confirmed that the communication overhead of GFL is the lowest.

However, we report here some inconsistencies in performing energy measurements. In fact on HW1, differently from the FFNN case, CFL is the most energy efficient on EMNIST and saves $14\%$ of energy with respect to GFL. On EMNISTp, instead, the situation is different since GFL saves $15\%$ of the energy. Moreover, when using HW2 (Table~\ref{tab:resultsEMNISTCNN_iesc}), CFL results to be the most energy efficient for both EMNIST and EMNISTp. Such inconsistencies are mainly due to the fact that the average computational power consumption in CFL implementation is higher than GFL (around $103$W and $93$W on EMNIST, respectively); however GFL requires longer training time ($T_{\rm train}$). Instead in the FFNN model implementation, the average computational power consumption is higher for GFL (around $19\text{W}$ for CFL and $13\text{W}$ for GFL), but GFL requires lower training time. The reason lays mainly in the software implementations\footnote{\url{https://github.com/eliaguerra/Federated_comparison_cttc}}. In fact, CFL and BFL are based on TFF, which executes the training process for all the participating clients in parallel. Differently, GFL is based on the standard TF libraries and the training process is executed sequentially one client after the other. 

In view of the above, we state here that hardware and software implementation play a key role in the energy assessment. 
Therefore, it is essential that future research directions will focus on: i) joint optimization of federated algorithms and their software implementations, ii) definition of standard libraries for the three categories of algorithms studied in this paper, and iii) design of effective and open test platforms for experiment comparison.

\begin{table*}[ht]
\caption{CNN simulation results on HW1 and EMNIST (EMNISTp) datasets.}
\label{tab:resultsEMNISTCNN_scavengesim}
\centering
\scriptsize
    \begin{tabular}{cccccccc}
    \toprule
        & \textbf{Acc. Training} & \textbf{Acc. Validation} & \textbf{Acc. Test}   &\textbf{Time (s)}              & \textbf{Comp. Energy (\%)} & \textbf{Tot. Energy (Wh)}  & \textbf{Comm. overhead (GB)}\\ \midrule
        CFL & $0.99$ $(0.97)$   & $0.97$ $(0.9)$      & $0.96$ $(0.91)$ & $132761.16$ $(126691.59)$ & $99.04$ $(98.1)$      & $72.35$ $(36.72)$ &  $186.4$  \\
        BFL & $0.99$ $(0.97)$   & $0.97$ $(0.9)$      & $0.96$ $(0.91)$ & $138452.61$ $(132284.94)$ & $99.94$ $(99.94)$     & $1198.11$ $(1161.99)$ & $37373.2$\\
        GFL & $0.99$ $(0.67)$   & $0.81$ $(0.22)$     & $0.8$ $(0.22)$  & $113573.11$ $(106616.64)$ & $99.73$ $(99.28)$     & $83.95$ $(31.22)$  &  $93.2$\\ 
    \bottomrule
    \end{tabular}
\end{table*} 

%iesc results
\begin{table*}[ht]
\caption{CNN simulation results on HW2 and EMNIST (EMNISTp) datasets.}
\label{tab:resultsEMNISTCNN_iesc}
\scriptsize
\centering
\begin{tabular}{cccccccc}
    \toprule
    & \textbf{Acc. Training} & \textbf{Acc. Validation} & \textbf{Acc. Test}   & \textbf{Conv. Time (s)}              & \textbf{Comp. Energy (\%)} &\textbf{Tot. Energy (Wh)} & \textbf{Comm. overhead (GB)} \\
    \midrule
    CFL & $0.99$ $(0.97)$   & $0.97$ $(0.9)$      & $0.96$ $(0.91)$ & $125883.47$ $(124869.07)$ & $99.65$ $(99.51)$     & $201.68$ $(141.69)$  & $186.4$\\
    BFL & $0.99$ $(0.97)$   & $0.97$ $(0.9)$      & $0.96$ $(0.91)$ & $131488.87$ $(130555.3)$  & $99.95$ $(99.95)$     & $1329.65$ $(1273.95)$ & $37373.2$\\
    GFL & $0.99$ $(0.67)$   & $0.8$ $(0.22)$      & $0.8$ $(0.22)$  & $114217.66$ $(107854.86)$ & $99.93$ $(99.84)$     & $319.77$ $(143.22)$ & $93.2$\\
    \bottomrule
\end{tabular}
\end{table*}
\section{Conclusions}
\label{sec:conclusions}
Decentralized server-less federated learning is an appealing solution to overcome CFL limitations. However, finding the best approach for each scenario is not trivial due to the lack of comprehensive comparisons. In this work, we have proposed a complete overview of these techniques and evaluated them through several key performance indicators: accuracy, computational complexity, communication overhead, convergence time, and energy consumption. To do so, we have proposed a comprehensive theoretical analysis and an implementation of these algorithms.

An extensive simulation campaign has driven our analysis.
From numerical results, it emerges that GFL is the algorithm that requires less communication overhead to reach convergence. Then, CFL and GFL have similar behavior in terms of energy consumption and accuracy, but slightly differ based on the DL model adopted and the hardware used. BFL represents a viable solution for implementing decentralized learning with a high accuracy and level of security at the cost of an extra energy usage and data sharing.

Moreover, we have discussed some open issues and future research directions for the two decentralized federated methods, like the poor accuracy achieved by GFL and the blockchain overhead in BFL.  Regarding GFL, we have argued that the main drawback lies in the method used to merge model updates across the algorithm steps. We have demonstrated that with an incremental approach, the modified version of GFL is able to outperform CFL and BFL. As for BFL, we have indicated that possible optimizations go in the direction of finding the best block generation rate and block size. Moreover, we have reasoned on the possibility of reducing the time complexity by including the global model in a block, which is aggregated by the same miner building the block. In addition, we have pointed out the importance of further studies on the implication of model inconsistencies due to the fact that the blockchain cannot be perfectly shared and accessed by (all) the FL devices. 

Finally, we have argued on the key role played by the libraries used for the implementation and their influence on the energy consumption on different hardware platforms. We call for the definition of standard libraries and open test platforms to be used for research purposes.

\appendices
\section{Proofs}

%\francesc{[Maybe it is better to have this content inside Appendix B: Proofs, just before Proof of Theorem I.]}

\subsection{Proof of Theorem I}
\label{sub:cfl_proof}
Let us consider the procedure~\textsc{ClientUpdate}, whose time complexity is $E\left(|D_{\max}||w| + 2\frac{|D_{\max}|}{B}|w|\right)$. In fact, a single client $k$ performs the training phase on its local dataset $D_k$ along $E$ local epochs and updates the model parameters. The first operation has a time complexity of $|D_k||w|$ and the second $2\frac{|D_{k}|}{B}|w|$. The update it is executed a number of times equal to $\frac{|D_k|}{B}$, and requires a product and a sum. Each client performs $E$ local epochs, so the total cost is: 
\begin{equation}
    \sum_{k \in S_t} E\left(|D_k||w| + 2\frac{|D_k|}{B}|w|\right)
\end{equation}
To obtain an upper bound that does not depend on $k$, we can use $|D_{\max}|$ as an upper bound of $|D_k|$: 

\begin{equation}
    \begin{split}
        \sum_{k \in S_t} E&\left(|D_k||w| + 2\frac{|D_k|}{B}|w|\right) \leq \\
        &mE\left(|D_{\max}||w| + 2\frac{|D_{\max}|}{B}|w|\right).
    \end{split}
    \label{eq:cost_cupdate}
\end{equation}

We can divide the \textsc{Main} procedure in Algorithm~\ref{alg:FedAVG} into two blocks. The first, up to Line~\ref{CFL:endB1}, has a cost upper bounded by 
\begin{equation}
    mE\left(|D_{\max}||w| + 2\frac{|D_{\max}|}{B}|w|\right) + 2|w|m.
\label{eq:CFL_b1cost}
\end{equation}
In parallel every client downloads the global model, executes \textsc{ClientUpdate}, and sends the updated parameters back to the server. The download and upload operations have a time complexity proportional to $|w|$. Considering that the same procedure is repeated by $m$ clients, the upper bound in~\eqref{eq:CFL_b1cost} easily follows. The second block starts from Line~\ref{CFL:endB1}, where the server aggregates the local updates and computes the new global model. The number of arithmetical operations performed is: 
\begin{equation}
    2|w|m.
\label{eq:CFL_b2cost}\end{equation}
%The computational cost due to the update of the round counter,  Line~\ref{CFL:counterupdate}, is negligible \francesc{(You could even remove this explanation)}. 
Combining~\eqref{eq:CFL_b1cost} and~\eqref{eq:CFL_b2cost}, and considering the number of total rounds $R$ required to reach convergence, the total cost of CFL is given by:
\begin{equation}
\begin{split}
    &R\left[mE\left(|D_{\max}||w| + 2\frac{|D_k|}{B}|w|\right) + 4m|w|\right] = \\
    &RmE|D_{\max}||w| + 2RmE\frac{|D_k|}{B}|w| + 4Rm|w|.
\end{split}
\label{eq:CFLcost_round}
\end{equation}

The first addend in~\eqref{eq:CFLcost_round} is the dominant term for the asymptotic time analysis, so this completes the proof to obtain~\eqref{eq:CFL_cost}. 

When it comes to the communications overhead of CFL, the result easily follows considering that, for each round, each clients downloads and uploads the model parameters.

\subsection{Proof of Theorem II}
\label{sub:bfl_proof}
In each algorithm's round, every client in $\mathcal{S}^t$ has to download the latest block from the closest edge server (miner) to obtain the current global model. These operations, as described before, have a cost of $|w|m$ and $2|w|m$, respectively. Then, after running the \textsc{ClientUpdate} procedure in Algorithm~\ref{alg:FLchain}, clients submit the new model weights with a cost of $|w|$. These steps are done by each node in $\mathcal{S}^t$ (in total, $m$ nodes), so the total cost is:
\begin{equation}
    \begin{split}
     m\Bigg(& 2|w|m + |w|m + E|D_{\max}||w|+ \\
    &2E\frac{|D_{\max}|}{B}|w| + |w|\Bigg).
    \end{split}
\label{eq:FLchain_training}
\end{equation}

When all the local updates have been computed, it is necessary to create a block, reach consensus throughout the mining operation, and propagate the block across all the blockchain nodes. The cost of these operations is given by: 
\begin{equation}
    2^l + m|w|N_B.
\label{eq:FLchain_block}
\end{equation}
If we combine together~\eqref{eq:FLchain_training} and~\eqref{eq:FLchain_block}, we obtain the total time complexity of the algorithm %\marco{[Break the equation, it is larger than the column]}
\begin{equation}
\begin{split}
    R\bigg(&3|w|m^2+ E|D_{\max}||w|m +\\
    & 2E\frac{|D_{\max}|}{B}|w|m + |w|m + 2^l + m|w|N_{B}\bigg).
\end{split}
\end{equation}
The dominant addends are reported in~\eqref{eq:BFL_cost}. 

The communication overhead of BFL can be easily derived from the algorithm description.

In this analysis, we considered the less efficient implementation, whereby each client has to perform the computation of the new global model given the updates in the latest block. To improve this, we can move the instruction in Line~\ref{FLchain:update} outside the \emph{for} loop and execute it before the \textsc{MineBlock} procedure. In this way, the new block has size $|w|$, since it contains only the parameters of the new model. Following the same analysis described before, the computational complexity is:
\begin{equation}
    O(R(mE|D_{\max}||w|+2^l+N_B|w|)).
\end{equation}
And the communication overhead is:
\begin{equation}
    R(2|w|m+N_B|w|).
\end{equation}

\subsection{Proof of Theorem III}
\label{sub:gfl_proof}
%We claim that the complexity of GFL method reported in Algorithm~\ref{alg:gfl} is 
%\begin{equation}
%    O(RmE|D_{max}||w|)
%\end{equation}
Let $k_i$ be a client in the sequence $[k_1, ..., k_{m}]$. Following the steps of Algorithm~\ref{alg:gfl}, three main operations are performed: 1) \textsc{Merge}, 2) \textsc{ClientUpdate} and 3) send of the model parameters to the next client of the sequence. The first one is the average of two model parameters, so its cost is $2|w|$. The cost of the second operation has already been computed in~\eqref{eq:cost_cupdate} and the cost of parameter sharing is $|w|$. By summing up these contributions we obtain:
%By adding the cost of \textsc{ClientUpdate}, computed in Eq~\eqref{eq:cost_cupdate}, and the cost for the parameters sharing, $|w|$, we obtain 
\begin{equation}
    m\left[E\left(|D_{\max}||w| + 2\frac{|D_{\max}|}{B}|w|\right)+ 3|w|\right].
\end{equation}
This process is repeated for $R$ rounds, so the time complexity is:
\begin{equation}
    Rm\left[E\left(|D_{\max}||w| + 2\frac{|D_{\max}|}{B}|w|\right)+ 3|w|\right],
\end{equation}
where the first addend is the dominant one.

Given that each client shares its local model only with the following node in the sequence, the communication overhead is given by~\eqref{eq:GFL_comm}.
% \begin{equation}
%     Rm|w|
% \end{equation}

\section{Edge connection model}
\label{sec:edge_conn_model}
To compute the total duration for transmitting model weights, we assume IEEE 802.11ax channel access procedures~\cite{bellalta2016ieee}, which also include the overheads to carry out the distributed coordination function (DCF) operation. In particular, the duration of a packet transmission is defined as:
\begin{equation}
\begin{split}
    T_{\rm{Tx}} = & Rm(T_{\rm{RTS}} + T_{\rm{SIFS}} + T_{\rm{CTS}} +  
    T_{\rm{DATA}} + \\ &T_{\rm{SIFS}} + T_{\rm{ACK}} + T_{\rm{DIFS}} + T_{\rm{e}}),
\end{split}
\end{equation}
where $T_{\rm{RTS}}$ is the duration of the ready-to-send (RTS) control frame, $T_{\rm{SIFS}}$ is the short interframe space (SIFS) duration, $T_{\rm{CTS}}$ is the duration of the clear-to-send (CTS) control frame, $T_{\rm{DATA}}$ is the duration of the data payload, $T_{\rm{ACK}}$ is the duration of the acknowledgement (ACK) frame, $T_{\rm{e}}$ is the duration of an empty slot, $R$ is the number of FL rounds, and $m$ the number of participating clients.

To compute the duration of each type of IEEE 802.11ax control frame, i.e., RTS, CTS, and ACK, we compute them as:
\begin{equation}
T_{\rm{RTS/CTS/ACK}} = T_{\rm{PHY}} + \bigg\lceil \frac{L_{\rm{SF}} + L_{\rm{RTS/CTS/ACK}}}{L_{\rm{s}}} \bigg\rceil \sigma_{\rm{leg}},
\end{equation}
where $T_{\rm{PHY}}$ is the duration of the PHY preamble, $L_{\rm{SF}}$ is the length of the service field (SF), $L_{\rm{RTS/CTS/ACK}}$ is the length of the control frame, $L_{\rm{s}}$ is the length of an orthogonal frequency division multiplexing (OFDM) symbol, and $\sigma_{\rm{leg}}$ is the duration of a legacy OFDM symbol.

As for the duration of the data payload, it is computed as:
\begin{equation}
    T_{\rm{DATA}} = T_{\rm{HE-SU}} + \bigg\lceil \frac{L_{\rm{SF}} + L_{\rm{MAC}} + L_{\rm{DATA}}}{L_{\rm{s}}} \bigg\rceil \sigma,
\end{equation}
where $T_{\rm{HE-SU}}$ is the duration of the high-efficiency (HE) single-user field, $L_{\rm{MAC}}$ is the length of the MAC header, $L_{\rm{DATA}}$ is the length of a single data packet (in our case, it matches with the model size, $S_w$), and $\sigma$ is the duration of an OFDM symbol. The number of bits per OFDM symbol will vary, so as the effective data rate, based on the employed modulation and coding scheme (MCS), which depends on the transmission power used.

%\input{intro}

%\bibliography{biblio}
\bibliography{references}

% Generated by IEEEtran.bst, version: 1.14 (2015/08/26)
\begin{thebibliography}{10}
\providecommand{\url}[1]{#1}
\csname url@samestyle\endcsname
\providecommand{\newblock}{\relax}
\providecommand{\bibinfo}[2]{#2}
\providecommand{\BIBentrySTDinterwordspacing}{\spaceskip=0pt\relax}
\providecommand{\BIBentryALTinterwordstretchfactor}{4}
\providecommand{\BIBentryALTinterwordspacing}{\spaceskip=\fontdimen2\font plus
\BIBentryALTinterwordstretchfactor\fontdimen3\font minus
  \fontdimen4\font\relax}
\providecommand{\BIBforeignlanguage}[2]{{%
\expandafter\ifx\csname l@#1\endcsname\relax
\typeout{** WARNING: IEEEtran.bst: No hyphenation pattern has been}%
\typeout{** loaded for the language `#1'. Using the pattern for}%
\typeout{** the default language instead.}%
\else
\language=\csname l@#1\endcsname
\fi
#2}}
\providecommand{\BIBdecl}{\relax}
\BIBdecl

\bibitem{ericsson_mobility}
``Ericsson mobility report november 2021,'' 2021.

\bibitem{8584062}
J.~Wang, J.~Liu, and N.~Kato, ``Networking and communications in autonomous
  driving: A survey,'' \emph{IEEE Communications Surveys Tutorials}, vol.~21,
  no.~2, pp. 1243--1274, 2019.

\bibitem{noauthor_ai_2018}
\BIBentryALTinterwordspacing
``\BIBforeignlanguage{en}{{AI} and {Compute}},'' May 2018. [Online]. Available:
  \url{https://openai.com/blog/ai-and-compute/}
\BIBentrySTDinterwordspacing

\bibitem{strubell2019energy}
E.~Strubell, A.~Ganesh, and A.~McCallum, ``Energy and policy considerations for
  deep learning in {NLP},'' in \emph{Proceedings of the 57th Annual Meeting of
  the Association for Computational Linguistics}, 2019, pp. 3645--3650.

\bibitem{zhang2022introduction}
T.~Zhang and S.~Mao, ``An introduction to the federated learning standard,''
  \emph{GetMobile: Mobile Computing and Communications}, vol.~25, no.~3, pp.
  18--22, 2022.

\bibitem{li2019survey}
Q.~Li, Z.~Wen, Z.~Wu, S.~Hu, N.~Wang, Y.~Li, X.~Liu, and B.~He, ``A survey on
  federated learning systems: vision, hype and reality for data privacy and
  protection,'' \emph{IEEE Transactions on Knowledge and Data Engineering},
  2021.

\bibitem{chen2021distributed}
M.~Chen, D.~G{\"u}nd{\"u}z, K.~Huang, W.~Saad, M.~Bennis, A.~V. Feljan, and
  H.~V. Poor, ``Distributed learning in wireless networks: Recent progress and
  future challenges,'' \emph{IEEE Journal on Selected Areas in Communications},
  2021.

\bibitem{deng2020edge}
S.~Deng, H.~Zhao, W.~Fang, J.~Yin, S.~Dustdar, and A.~Y. Zomaya, ``Edge
  intelligence: The confluence of edge computing and artificial intelligence,''
  \emph{IEEE Internet of Things Journal}, vol.~7, no.~8, pp. 7457--7469, 2020.

\bibitem{zhou2019edge}
Z.~Zhou, X.~Chen, E.~Li, L.~Zeng, K.~Luo, and J.~Zhang, ``Edge intelligence:
  Paving the last mile of artificial intelligence with edge computing,''
  \emph{Proceedings of the IEEE}, vol. 107, no.~8, pp. 1738--1762, 2019.

\bibitem{Ahvar2019}
E.~Ahvar, A.-C. Orgerie, and A.~Lebre, ``Estimating energy consumption of
  cloud, fog and edge computing infrastructures,'' \emph{IEEE Transactions on
  Sustainable Computing}, pp. 1--1, 2019.

\bibitem{konevcny2016federated}
J.~Kone{\v{c}}n{\`y}, H.~B. McMahan, F.~X. Yu, P.~Richt{\'a}rik, A.~T. Suresh,
  and D.~Bacon, ``Federated learning: Strategies for improving communication
  efficiency,'' \emph{arXiv preprint arXiv:1610.05492}, 2016.

\bibitem{xie2019asynchronous}
C.~Xie, S.~Koyejo, and I.~Gupta, ``Asynchronous federated optimization,''
  \emph{arXiv preprint arXiv:1903.03934}, 2019.

\bibitem{kairouz2019advances}
P.~Kairouz, H.~B. McMahan, B.~Avent, A.~Bellet, M.~Bennis, A.~N. Bhagoji,
  K.~Bonawitz, Z.~Charles, G.~Cormode, R.~Cummings \emph{et~al.}, ``Advances
  and open problems in federated learning,'' \emph{Foundations and
  Trends{\textregistered} in Machine Learning}, vol.~14, no. 1--2, pp. 1--210,
  2021.

\bibitem{mothukuri2021survey}
V.~Mothukuri, R.~M. Parizi, S.~Pouriyeh, Y.~Huang, A.~Dehghantanha, and
  G.~Srivastava, ``A survey on security and privacy of federated learning,''
  \emph{Future Generation Computer Systems}, vol. 115, pp. 619--640, 2021.

\bibitem{barbieri2022decentralized}
L.~Barbieri, S.~Savazzi, M.~Brambilla, and M.~Nicoli, ``Decentralized federated
  learning for extended sensing in {6G} connected vehicles,'' \emph{Vehicular
  Communications}, vol.~33, p. 100396, 2022.

\bibitem{lalitha2018fully}
A.~Lalitha, S.~Shekhar, T.~Javidi, and F.~Koushanfar, ``Fully decentralized
  federated learning,'' in \emph{Third workshop on Bayesian Deep Learning
  (NeurIPS)}, 2018.

\bibitem{ormandi2013gossip}
R.~Orm{\'a}ndi, I.~Heged\"{u}s, and M.~Jelasity, ``Gossip learning with linear
  models on fully distributed data,'' \emph{Concurrency and Computation:
  Practice and Experience}, vol.~25, no.~4, pp. 556--571, 2013.

\bibitem{giaretta2019gossip}
L.~Giaretta and {\v{S}}.~Girdzijauskas, ``Gossip learning: Off the beaten
  path,'' in \emph{2019 IEEE International Conference on Big Data (Big
  Data)}.\hskip 1em plus 0.5em minus 0.4em\relax IEEE, 2019, pp. 1117--1124.

\bibitem{miozzo2021distributed}
M.~Miozzo, Z.~Ali, L.~Giupponi, and P.~Dini, ``Distributed and multi-task
  learning at the edge for energy efficient radio access networks,'' \emph{IEEE
  Access}, vol.~9, pp. 12\,491--12\,505, 2021.

\bibitem{wilhelmi2021blockchain}
F.~Wilhelmi, L.~Giupponi, and P.~Dini, ``Blockchain-enabled server-less
  federated learning,'' \emph{arXiv preprint arXiv:2112.07938}, 2021.

\bibitem{qiu2020can}
X.~Qiu, T.~Parcollet, D.~Beutel, T.~Topal, A.~Mathur, and N.~Lane, ``Can
  federated learning save the planet?'' in \emph{NeurIPS-Tackling Climate
  Change with Machine Learning}, 2020.

\bibitem{savazzi2022energy}
S.~Savazzi, V.~Rampa, S.~Kianoush, and M.~Bennis, ``An energy and carbon
  footprint analysis of distributed and federated learning,'' \emph{arXiv
  preprint arXiv:2206.10380}, 2022.

\bibitem{tensorflow2015-whitepaper}
\BIBentryALTinterwordspacing
M.~Abadi, A.~Agarwal, P.~Barham, E.~Brevdo, Z.~Chen, C.~Citro, G.~S. Corrado,
  A.~Davis, J.~Dean, M.~Devin, S.~Ghemawat, I.~Goodfellow, A.~Harp, G.~Irving,
  M.~Isard, Y.~Jia, R.~Jozefowicz, L.~Kaiser, M.~Kudlur, J.~Levenberg,
  D.~Man\'{e}, R.~Monga, S.~Moore, D.~Murray, C.~Olah, M.~Schuster, J.~Shlens,
  B.~Steiner, I.~Sutskever, K.~Talwar, P.~Tucker, V.~Vanhoucke, V.~Vasudevan,
  F.~Vi\'{e}gas, O.~Vinyals, P.~Warden, M.~Wattenberg, M.~Wicke, Y.~Yu, and
  X.~Zheng, ``{TensorFlow}: Large-scale machine learning on heterogeneous
  systems,'' 2015, software available from tensorflow.org. [Online]. Available:
  \url{https://www.tensorflow.org/}
\BIBentrySTDinterwordspacing

\bibitem{verbraeken2020survey}
J.~Verbraeken, M.~Wolting, J.~Katzy, J.~Kloppenburg, T.~Verbelen, and J.~S.
  Rellermeyer, ``A survey on distributed machine learning,'' \emph{ACM
  Computing Surveys (CSUR)}, vol.~53, no.~2, pp. 1--33, 2020.

\bibitem{mcmahan2017communication}
B.~McMahan, E.~Moore, D.~Ramage, S.~Hampson, and B.~Ag\"{u}era~y Arcas,
  ``Communication-efficient learning of deep networks from decentralized
  data,'' in \emph{Artificial intelligence and statistics}.\hskip 1em plus
  0.5em minus 0.4em\relax PMLR, 2017, pp. 1273--1282.

\bibitem{li2020federated}
T.~Li, A.~K. Sahu, A.~Talwalkar, and V.~Smith, ``Federated learning:
  Challenges, methods, and future directions,'' \emph{IEEE Signal Processing
  Magazine}, vol.~37, no.~3, pp. 50--60, 2020.

\bibitem{kim2019blockchained}
H.~Kim, J.~Park, M.~Bennis, and S.-L. Kim, ``Blockchained on-device federated
  learning,'' \emph{IEEE Communications Letters}, vol.~24, no.~6, pp.
  1279--1283, 2019.

\bibitem{majeed2019flchain}
U.~Majeed and C.~S. Hong, ``Flchain: Federated learning via mec-enabled
  blockchain network,'' in \emph{2019 20th Asia-Pacific Network Operations and
  Management Symposium (APNOMS)}.\hskip 1em plus 0.5em minus 0.4em\relax IEEE,
  2019, pp. 1--4.

\bibitem{8905038}
X.~Bao, C.~Su, Y.~Xiong, W.~Huang, and Y.~Hu, ``Flchain: A blockchain for
  auditable federated learning with trust and incentive,'' in \emph{2019 5th
  International Conference on Big Data Computing and Communications (BIGCOM)},
  2019, pp. 151--159.

\bibitem{hegedHus2021decentralized}
I.~Heged\"{u}s, G.~Danner, and M.~Jelasity, ``Decentralized learning works: An
  empirical comparison of gossip learning and federated learning,''
  \emph{Journal of Parallel and Distributed Computing}, vol. 148, pp. 109--124,
  2021.

\bibitem{nilsson2018performance}
A.~Nilsson, S.~Smith, G.~Ulm, E.~Gustavsson, and M.~Jirstrand, ``A performance
  evaluation of federated learning algorithms,'' in \emph{Proceedings of the
  second workshop on distributed infrastructures for deep learning}, 2018, pp.
  1--8.

\bibitem{lacoste2019quantifying}
A.~Lacoste, A.~Luccioni, V.~Schmidt, and T.~Dandres, ``Quantifying the carbon
  emissions of machine learning,'' \emph{arXiv preprint arXiv:1910.09700},
  2019.

\bibitem{lannelongue2021green}
L.~Lannelongue, J.~Grealey, and M.~Inouye, ``Green algorithms: Quantifying the
  carbon footprint of computation,'' \emph{Advanced Science}, vol.~8, no.~12,
  p. 2100707, 2021.

\bibitem{schwartz2020green}
R.~Schwartz, J.~Dodge, N.~A. Smith, and O.~Etzioni, ``Green ai,''
  \emph{Communications of the ACM}, vol.~63, no.~12, pp. 54--63, 2020.

\bibitem{nakamoto2008bitcoin}
S.~Nakamoto, ``Bitcoin: A peer-to-peer electronic cash system,''
  \emph{Decentralized business review}, p. 21260, 2008.

\bibitem{nguyen2021federated}
D.~C. Nguyen, M.~Ding, Q.-V. Pham, P.~N. Pathirana, L.~B. Le, A.~Seneviratne,
  J.~Li, D.~Niyato, and H.~V. Poor, ``Federated learning meets blockchain in
  edge computing: Opportunities and challenges,'' \emph{IEEE Internet of Things
  Journal}, 2021.

\bibitem{wilhelmi2021discrete}
F.~Wilhelmi and L.~Giupponi, ``Discrete-time analysis of wireless blockchain
  networks,'' in \emph{2021 IEEE 32nd Annual International Symposium on
  Personal, Indoor and Mobile Radio Communications (PIMRC)}.\hskip 1em plus
  0.5em minus 0.4em\relax IEEE, 2021, pp. 1011--1017.

\bibitem{zheng2017overview}
Z.~Zheng, S.~Xie, H.~Dai, X.~Chen, and H.~Wang, ``An overview of blockchain
  technology: Architecture, consensus, and future trends,'' in \emph{2017 IEEE
  international congress on big data (BigData congress)}.\hskip 1em plus 0.5em
  minus 0.4em\relax IEEE, 2017, pp. 557--564.

\bibitem{bellalta2016ieee}
B.~Bellalta, ``{IEEE} 802.11 ax: High-efficiency wlans,'' \emph{IEEE Wireless
  Communications}, vol.~23, no.~1, pp. 38--46, 2016.

\bibitem{lasla2020green}
N.~Lasla, L.~Al-Sahan, M.~Abdallah, and M.~Younis, ``Green-pow: An
  energy-efficient blockchain proof-of-work consensus algorithm,''
  \emph{Computer Networks}, vol. 214, p. 109118, 2022.

\bibitem{emnist}
\BIBentryALTinterwordspacing
``Emnist {\textbar} {TensorFlow} {Federated},'' accessed on December 2022.
  [Online]. Available:
  \url{https://www.tensorflow.org/federated/api_docs/python/tff/simulation/datasets/emnist}
\BIBentrySTDinterwordspacing

\bibitem{tff}
\BIBentryALTinterwordspacing
TensorFlow, ``Tensorflow federated,'' accessed on: December 2022. [Online].
  Available: \url{https://www.tensorflow.org/federated}
\BIBentrySTDinterwordspacing

\bibitem{noauthor_keras:_nodate}
\BIBentryALTinterwordspacing
``Keras: the {Python} deep learning {API}.'' [Online]. Available:
  \url{https://keras.io/}
\BIBentrySTDinterwordspacing

\bibitem{10.3389/fbloc.2020.00028}
\BIBentryALTinterwordspacing
M.~Alharby and A.~van Moorsel, ``Blocksim: An extensible simulation tool for
  blockchain systems,'' \emph{Frontiers in Blockchain}, vol.~3, 2020. [Online].
  Available: \url{https://www.frontiersin.org/article/10.3389/fbloc.2020.00028}
\BIBentrySTDinterwordspacing

\bibitem{tff_tutorial}
\BIBentryALTinterwordspacing
TensorFlow, ``Federated learning for image classification,'' accessed on:
  December 2022. [Online]. Available:
  \url{https://github.com/tensorflow/federated/blob/v0.17.0/docs/tutorials/federated_learning_for_image_classification.ipynb}
\BIBentrySTDinterwordspacing

\bibitem{anthony2020carbontracker}
L.~F.~W. Anthony, B.~Kanding, and R.~Selvan, ``Carbontracker: Tracking and
  predicting the carbon footprint of training deep learning models,''
  \emph{arXiv preprint arXiv:2007.03051}, 2020.

\bibitem{huang2022continual}
Y.~Huang, C.~Bert, S.~Fischer, M.~Schmidt, A.~D{\"o}rfler, A.~Maier,
  R.~Fietkau, and F.~Putz, ``Continual learning for peer-to-peer federated
  learning: A study on automated brain metastasis identification,'' \emph{arXiv
  preprint arXiv:2204.13591}, 2022.

\bibitem{shahsavari2019theoretical}
Y.~Shahsavari, K.~Zhang, and C.~Talhi, ``A theoretical model for fork analysis
  in the bitcoin network,'' in \emph{2019 IEEE International Conference on
  Blockchain (Blockchain)}.\hskip 1em plus 0.5em minus 0.4em\relax IEEE, 2019,
  pp. 237--244.

\bibitem{wilhelmi2022end}
F.~Wilhelmi, S.~Barrachina-Mu{\~n}oz, and P.~Dini, ``End-to-end latency
  analysis and optimal block size of proof-of-work blockchain applications,''
  \emph{IEEE Communications Letters}, vol.~26, no.~10, pp. 2332--2335, 2022.

\bibitem{pokhrel2020federated}
S.~R. Pokhrel and J.~Choi, ``Federated learning with blockchain for autonomous
  vehicles: Analysis and design challenges,'' \emph{IEEE Transactions on
  Communications}, vol.~68, no.~8, pp. 4734--4746, 2020.

\end{thebibliography}
\bibliographystyle{IEEEtran}

\clearpage
\begin{IEEEbiography}
[{\includegraphics[width=1in,height=1.25in,clip,keepaspectratio]{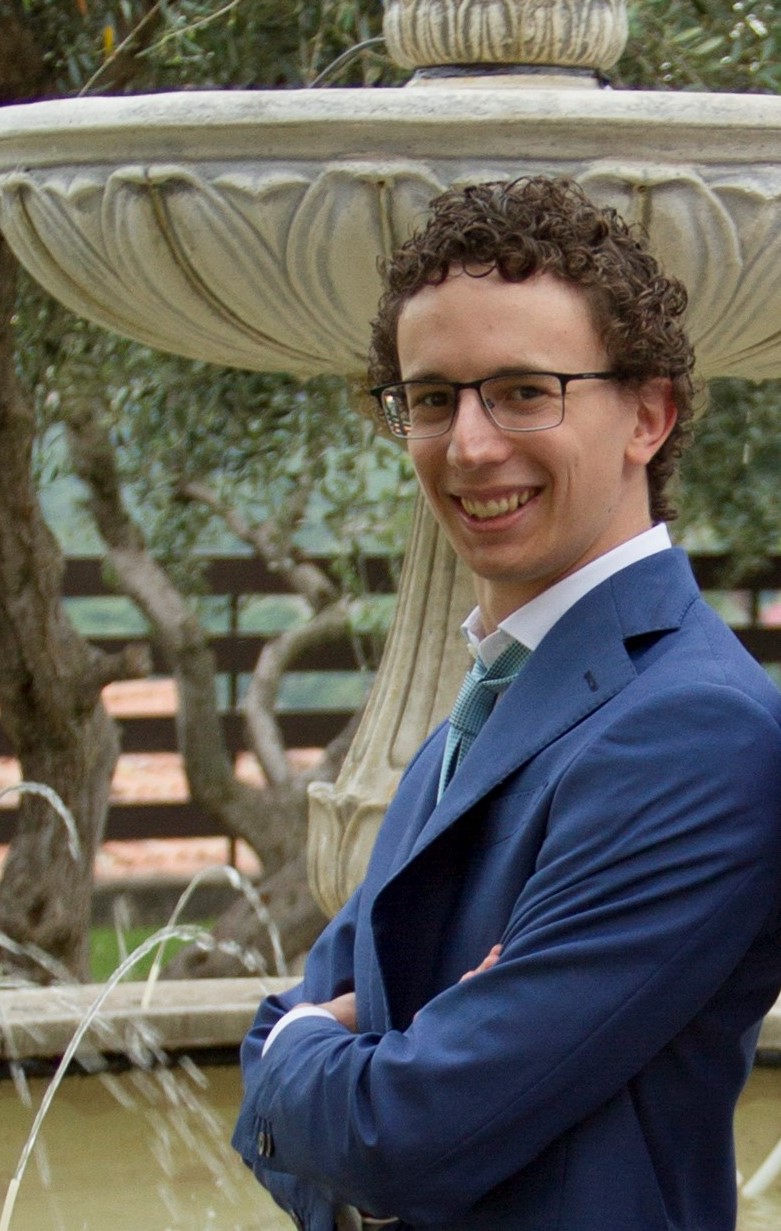}}]
    {Elia Guerra} received his master's degree in Computer Engineering at the University of Padova (Italy) in 2021. Prior to this, he got his bachelor's degree in Information Engineering in 2019. During his studies, he developed a passion for Machine Learning and Algorithms.
    He is a Ph.D. student at the Technical University of Catalonia (UPC) and he is currently working at CTTC for the GREENEDGE (MSCA ETN) project. His main research lines are distributed/decentralized and sustainable machine learning algorithms.
\end{IEEEbiography}

\begin{IEEEbiography}
[{\includegraphics[width=1in,height=1.25in,clip,keepaspectratio]{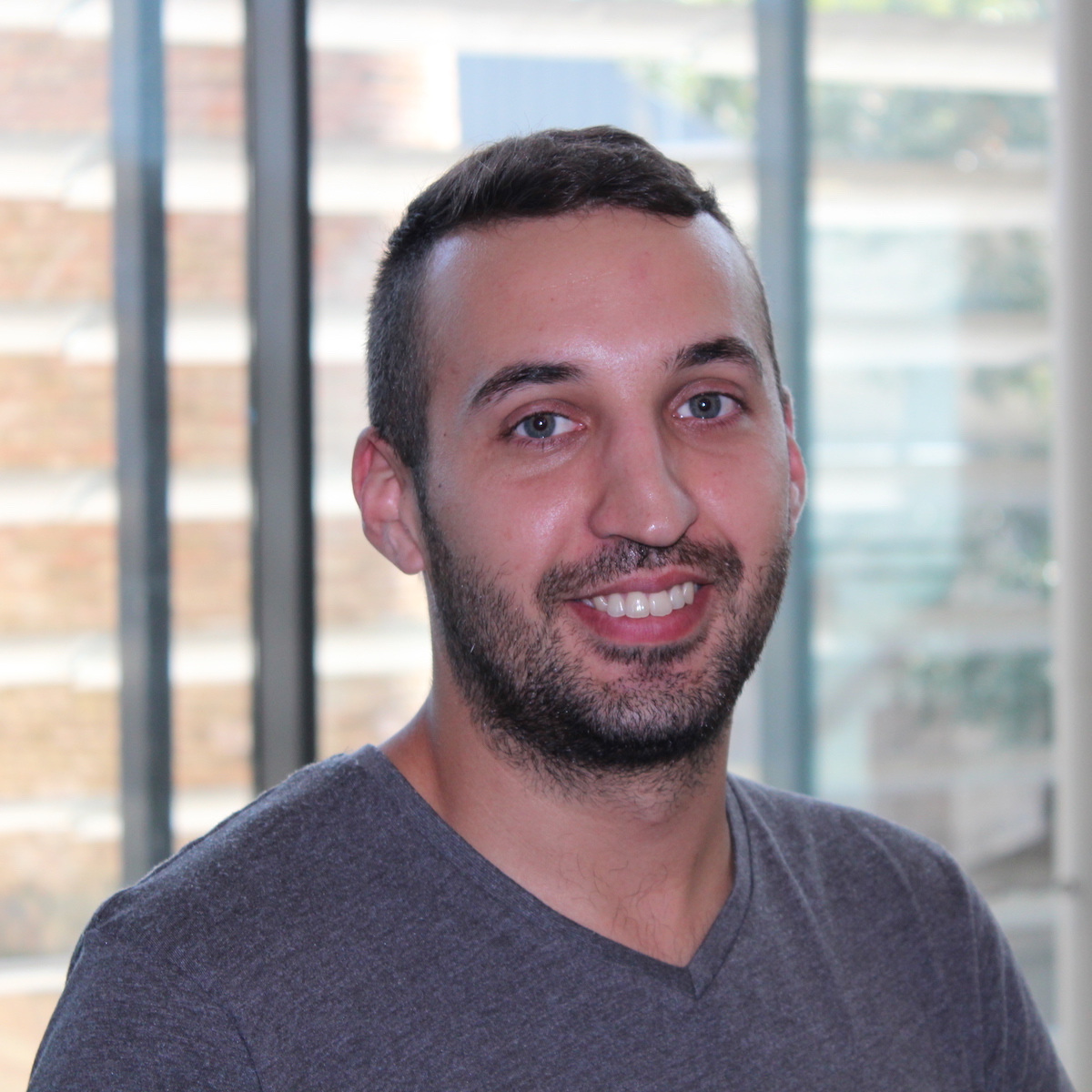}}]
    {Francesc Wilhelmi}holds a Ph.D. in Information and Communication Technologies (2020), from Universitat Pompeu Fabra (UPF). Previously, he obtained a B.Sc. degree in Telematics Engineering (2015) and an M.Sc. in Intelligent and Interactive Systems (2016), also from the UPF. He is currently working as a researcher at Nokia Bell Labs.
\end{IEEEbiography}

\begin{IEEEbiography}
    [{\includegraphics[width=1in,height=1.25in,clip,keepaspectratio]{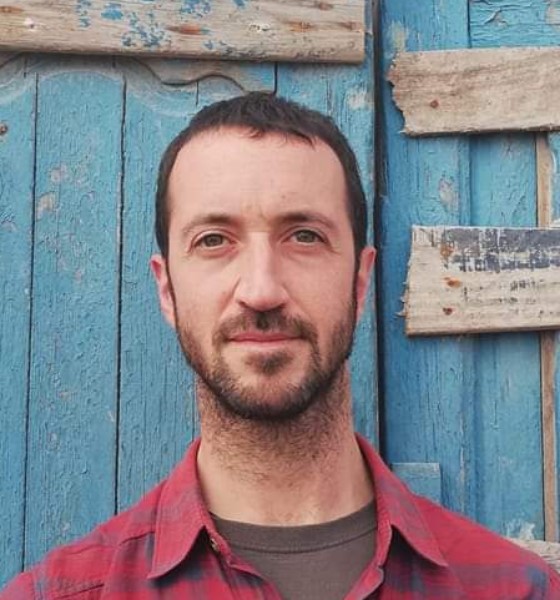}}]
    {Marco Miozzo} received his M.Sc. degree in Telecommunication Engineering from the University of Ferrara (Italy) in 2005 and the Ph.D. from the Technical University of Catalonia (UPC) in 2018. In June 2008 he joined the Centre Tecnologic de Telecomunicacions de Catalunya (CTTC). In CTTC he has been involved in several EU founded projects. He participated in several R\&D projects, among them SCAVENGE, 5G-Crosshaul, Flex5Gware and SANSA, working on environmental sustainable mobile networks with energy harvesting capabilities through learning techniques. Currently he is collaborating with the EU founded H2020 GREENEDGE (MSCA ETN) and SONATA (CHIST-ERA). His main research interests are: sustainable mobile networks, green wireless networking, energy harvesting, multi-agent systems, machine learning, green AI, explainable AI.

\end{IEEEbiography}

\begin{IEEEbiography}    [{\includegraphics[width=1in,height=1.25in,clip,keepaspectratio]{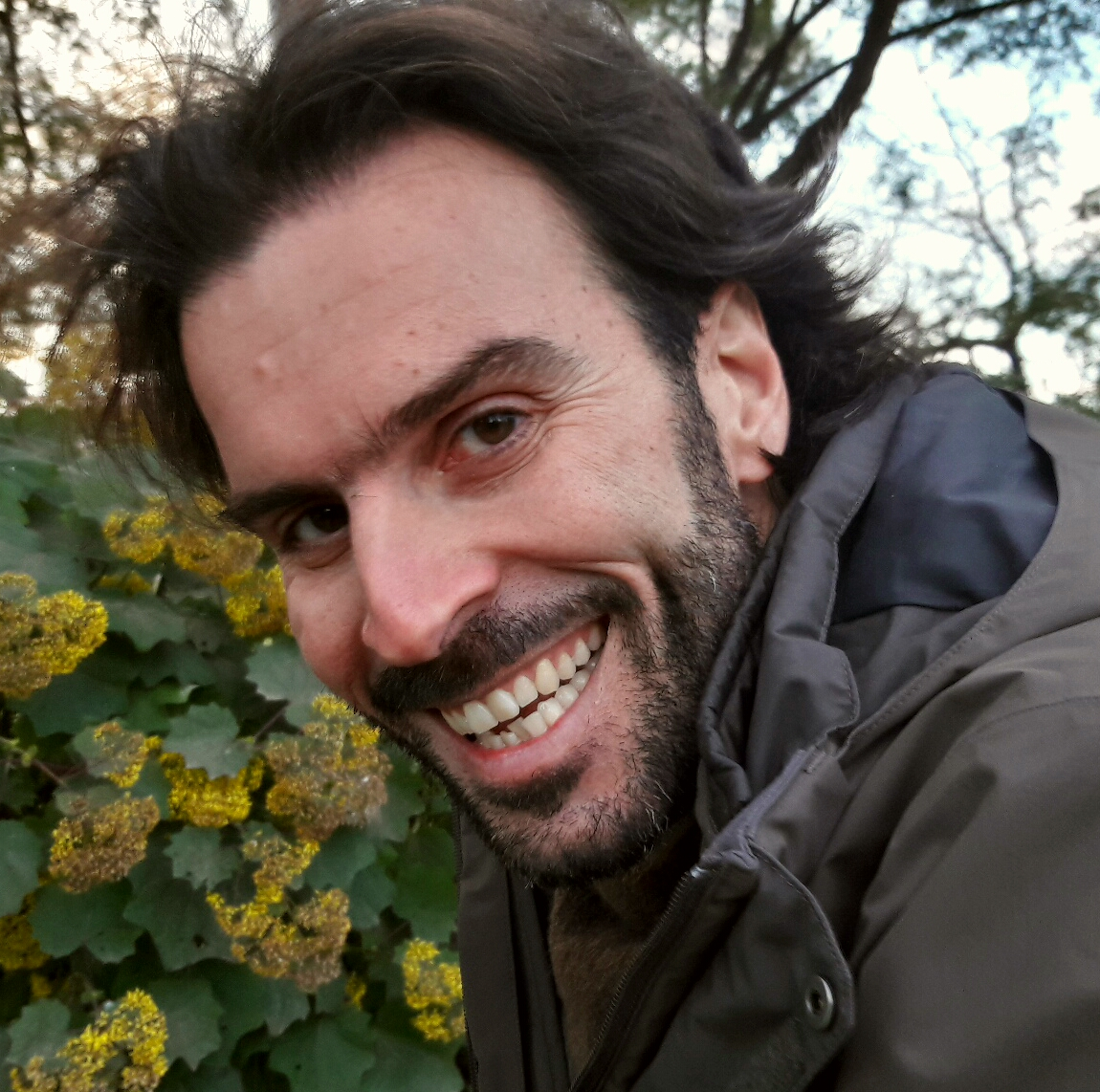}}]{Paolo Dini} received M.Sc. and Ph.D. from the Universit`a di Roma La Sapienza, in 2001 and 2005, respectively. He is currently a Senior Researcher with the Centre Tecnologic de Telecomunicacions de Catalunya (CTTC). His current research interests include sustainable networking and computing, distributed optimization and optimal control, machine learning, multi-agent systems and data analytics. His research activity is documented in almost 90 peer-reviewed scientific journals and international conference papers. He received two awards from the Cisco Silicon Valley Foundation for his research on heterogeneous mobile networks, in 2008 and 2011, respectively. He has been involved in more than 25 research projects. He is currently the Coordinator of CHIST-ERA SONATA project on sustainable computing and communication at the edge and the Scientific Coordinator of the EU H2020 MSCA Greenedge European Training Network on edge intelligence and sustainable computing. He serves as a TPC in many international conferences and workshops and as a reviewer for several scientific journals of the IEEE, Elsevier, ACM, Springer, Wiley.
\end{IEEEbiography}

\end{document}